\newcommand{\ifof}{if and only if }
\newcommand{\rsd}{rough semantic domain }
\newcommand{\rsds}{rough semantic domains }
\newcommand{\ifsf}{\,\mathrm{if}\: \mathrm{and}\:\mathrm{only}\: \mathrm{if}\: }
\newenvironment{mitemize}{\begin{itemize}}{\end{itemize}}
\newenvironment{impemize}{\begin{itemize}}{\end{itemize}}
\begin{document}

\title{More on Dialectics of Knowledge Representation in a Granular Rough Set Theory}
\author{A. Mani\thanks {I would like to thank Prof Mihir Chakraborty for
discussions on PRAX.}}
\institute{Department of Pure Mathematics\\
University of Calcutta\\
9/1B, Jatin Bagchi Road\\
Kolkata-700029, India\\
Web: \url{http://www.logicamani.in}\\
Email: \texttt {$a.mani.cms@gmail.com$}\\
\textbf{\textsc{Expanded Version of ICLA'2013 Paper}}}
\maketitle

\begin{abstract}
The concepts of rough and definite objects are relatively more determinate than
those of granules and granulation in general rough set theory (\textsf{RST})
\cite{AM240}. Representation of rough objects can however depend on the dialectical
relation between granulation and definiteness. In this research, we exactify this aspect
in \textsf{RST} over proto-transitive approximation spaces. This approach can
be directly extended to many other types of \textsf{RST}. These are used for formulating
an extended concept of knowledge interpretation (\textsf{KI})(relative the situation for
classical
\textsf{RST}) and the problem of knowledge representation (\textsf{KR}) is solved. These
will be
of direct interest in granular \textsf{KR} in \textsf{RST} as developed by
the present author \cite{AM909} and of rough objects in general. In \cite{AM2400}, these
have already been used for five different semantics by the present author. This is an extended version of \cite{AM270} with key examples and more results. 

\medskip

\textbf{keywords}:
Rough Objects, Proto Transitivity, Granulation, RYS, Lattice Theory of Definite Objects,
Axiomatic Theory of Granules, Contamination Problem, Granular Knowledge, Knowledge
Representation.
\end{abstract}


\section{Introduction}

General transitive relations are of much interest in a wide variety of application
scenarios including vagueness and preference. But the semantics for the former cases
have not been considered in the literature prior to this and \cite{AM2400} by the
present author. Here we develop the \textsf{KI} over a granular
representation of rough objects. We also show the gaping holes in meaning
that would be left by omitting granularity in the considerations.

Rough objects as explained in \cite{AM99,AM240} are collections of objects in a classical
domain (Meta-C) that appear to be indistinguishable among themselves in another rough
semantic domain (Meta-R). But their representation in most \textsf{RST}s in purely
order theoretic terms is not known. In this research paper, we do this for a specific type
of \textsf{RST} over proto-transitive approximation spaces (\textsf{PRAS}) developed
recently by the present author in \cite{AM2400}. The method can be directly extended to
many other types of \textsf{RST}. In \cite{AM2400}, five different algebraic semantics in
slightly different \rsds are also developed by the present author. In general, rough
objects correspond to concepts in the Pawlak-sense \textsf{KI} as
they correspond to
the \emph{objects that can be perceived in the \rsd}. Here we focus on the
generalized approach to granular \textsf{KI} in general \textsf{RST}
initiated in \cite{AM909}. Relative the extended \textsf{KI} in a semantic
domain placed between the classical (Meta-C) and Meta-R (of rough objects), the problem of
\textsf{KR} is solved for reflexive \textsf{PRAS} (\textsf{PRAX}). 

From a simplified view, rough objects may be represented by pairs or tuples of definite
objects under some conditions. In classical \textsf{RST}, definite objects are precisely
those that satisfy $x^l \,=\, x^u \,=\, x$. Pairs of definite objects of the form $(a,
b)$ satisfying $a \subseteq b$ necessarily represent rough objects and every rough object
can be so represented. Further intervals of the form $]a, b[\,=\, \{c\,: \, a\subset
\, c\, \subset b,\}$ represent rough objects if and only if $b$ covers $a$. However not
all intervals of the form correspond to rough objects without the covering condition.
This relation becomes more complicated in more general \textsf{RST}s as in the case of
\textsf{PRAX}. We characterize this in this research.  

Classical \textsf{RST} is \textsf{RST} starting from an
approximation space of the form $\left\langle \underline{S},\, R \right\rangle$ with
$\underline{S}$ being a set and $R$ being an equivalence relation on $\underline{S}$. 
\emph{Weak transitivity} of \cite{ICH} is \emph{proto-transitivity} here and distinct
from the usage in \cite{AM24}.

\begin{definition}
A binary relation $R$ on a set $S$ is said to be \emph{weakly-transitive, transitive or
proto-transitive} respectively on $S$ \ifof 
$S$ satisfies 
\begin{mitemize}
\item {$(\forall x, y,
z)(Rxy,\,Ryz \,\&\, x\,\neq\,y\,\neq\,z\,\longrightarrow\,Rxz)$
(i.e. $(R\circ R)\setminus \Delta_{S}\,\subseteq R$ (where $\circ$ is relation composition)
, or}
\item {$(\forall x, y,
z)(Rxy \,\&\,Ryz \,\longrightarrow\,Rxz)$ (i.e. $(R\circ
R) \subseteq R$), or }
\item {$(\forall x, y,
z)(Rxy,\,Ryz , \,Ryx ,\, Rzy \,\&\,
x\,\neq\,y\,\neq\,z\,\longrightarrow\,Rxz )$, respectively. Proto-transitivity of
$R$ is equivalent to $R\cap R^{-1} \,=\,\tau(R)$ being weakly transitive.} 
\end{mitemize}
\end{definition}

$Ref (S)$, $r\tau (S),$ $w\tau(S),\, p\tau(S),\, EQ(S)$ will respectively denote the set
of reflexive, transitive, weakly transitive, proto transitive,
and equivalence relations on the set $S$ respectively. 
\begin{proposition}
In general, $w\tau(S) \,\subseteq\, p\tau(S) .$
If $R\in Ref(S)$, then $R\in p\tau(S)$ \ifof $\tau(R)\in EQ(S)$. 
\end{proposition}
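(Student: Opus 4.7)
\medskip

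\noindent\textbf{Proof proposal.} The plan is to split the proposition into its two assertions and handle each by unpacking definitions, with the second relying on the remark (already asserted in the definition) that proto-transitivity of $R$ is equivalent to weak transitivity of $\tau(R) = R \cap R^{-1}$.

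For the inclusion $w\tau(S) \subseteq p\tau(S)$, I would argue by direct comparison of hypotheses. Suppose $R$ is weakly transitive, and assume the antecedent of proto-transitivity, namely $Rxy$, $Ryz$, $Ryx$, $Rzy$ with $x \neq y \neq z$. Discarding the two extra conjuncts $Ryx$ and $Rzy$ leaves precisely the antecedent of weak transitivity, so $Rxz$ follows. Hence $R \in p\tau(S)$. There is nothing subtle here; the point is simply that proto-transitivity has a strictly weaker conclusion requirement because it demands more of its input.

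For the second claim, assume $R \in \mathrm{Ref}(S)$. I would first record two trivial facts about $\tau(R) = R \cap R^{-1}$: it is symmetric by construction, and it is reflexive since $R$ is. Next, using the equivalence noted after the definition, $R \in p\tau(S)$ iff $\tau(R)$ is weakly transitive. So the real content is to show that, for a reflexive symmetric relation, weak transitivity coincides with transitivity. One direction is immediate. For the converse, suppose $\tau(R)$ is weakly transitive and $\tau(R)xy$, $\tau(R)yz$ hold; I would split on whether the three elements are pairwise distinct. If they are, weak transitivity gives $\tau(R)xz$ directly. If $x = y$, or $y = z$, or $x = z$, the desired $\tau(R)xz$ reduces either to one of the assumed instances or to reflexivity $\tau(R)xx$. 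Combining this with reflexivity and symmetry yields $\tau(R) \in EQ(S)$, completing the equivalence.

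The main obstacle, such as it is, is just the bookkeeping in the distinctness case analysis for the transitivity-from-weak-transitivity step; everything else is definitional. No new constructions or nontrivial lemmas are needed beyond the already-stated equivalence between $R \in p\tau(S)$ and weak transitivity of $\tau(R)$.
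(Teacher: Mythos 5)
Your proof is correct: the first inclusion follows exactly as you say by weakening the antecedent, and the second part correctly reduces, via the stated equivalence of proto-transitivity of $R$ with weak transitivity of $\tau(R)$, to showing that a reflexive relation is weakly transitive iff it is transitive, with the degenerate cases $x=y$, $y=z$, $x=z$ handled by the hypotheses or by reflexivity. The paper states this proposition without any proof, so there is nothing to compare against; your argument is the natural definitional one and fills that gap adequately.
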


\subsection{Granules and Granular Computing Paradigms}

The idea of granular computing is as old as human evolution. Even in the available information on earliest human habitations and dwellings, it is possible to identify a primitive granular computing process (\textsf{PGCP}) at work. This can for example be seen from the stone houses, dating to 3500 BCE, used in what is present-day Scotland. The main features of this and other primitive versions of the paradigm may be seen to be   

\begin{itemize}
\item {Problem requirements are not rigid.}
\item {Concept of granules may be vague.}
\item {Little effort on formalization right up to approximately the middle of the previous century.}
\item {Scope of abstraction is very limited.}
\item {Concept of granules may be concrete or abstract (relative all materialist viewpoints).}
\end{itemize}

The precision based granular computing paradigm, traceable to Moore and Shannon's paper \cite{Sha56}, will be referred to as the \emph{classical granular computing paradigm} \textsf{CGCP} is usually understood as the granular computing paradigm (The reader may  note that the idea is vaguely present in \cite{Sha48}). The distinct terminology would be useful to keep track of the differences with other paradigms. CGCP has since been adapted to fuzzy and rough set theories in different ways. 

Granules may be assumed to subsume the concept of information granules -- information at some level of precision. In granular approaches to both rough and fuzzy sets, we are usually concerned with such types of granules. Some of the fragments involved in applying CGCP may be:
\begin{itemize}
\item {Paradigm Fragment-1: Granules can exist at different levels of precision.}
\item {Paradigm Fragment-2: Among the many precision levels, choose a precision level at which the problem at hand is solved. }
\item {Paradigm Fragment-3: Granulations (granules at specific levels or processes) form a hierarchy (later development).}
\item {Paradigm Fragment-4: It is possible to easily switch between precision levels.}
\item {Paradigm Fragment-5: The problem under investigation may be represented by the hierarchy of multiple levels of granulations.}
\end{itemize}

The not so independent stages of development of the different granular computing paradigms is stated below:  
\begin{itemize}
\item {Classical Primitive Paradigm till middle of previous century.}
\item {CGCP: Since Shannon's information theory}
\item {CGCP in fuzzy set theory. It is natural for most real-valued types of fuzzy sets, but even in such domains unsatisfactory results are normal. For one thing linguistic hedges have little to do with numbers. A useful reference would be \cite{LZ9}.}
\item {For a long period (up to 2008 or so), the adaptation of CGCP for RST has been based solely on precision and related philosophical aspects. The adaptation is described for example in \cite{Ya01}. In the same paper the hierarchical structure of granulations is also stressed. This and many later papers on CGCP (like \cite{TYL}) in rough sets speak of structure of granulations. }
\item {Some Papers with explicit reference to multiple types of granules from a semantic viewpoint include \cite{AM69,AM99,SW3,SW,AM105}.}
\item {The axiomatic approach to granularity initiated in \cite{AM99} has been developed by the present author in the direction of contamination reduction in \cite{AM240}. From the order-theoretic/algebraic point of view, the deviation is in a very new direction relative the precision-based paradigm. The paradigm shift includes a new approach to measures. }
\end{itemize}

There are other adaptations of CGCP to soft computing like \cite{KCM} that we will not consider.

Unless the underlying language is restricted, granulations can bear upon the theory with unlimited diversity. Thus for example in classical \textsf{RST}, we can take any of the following as granulations: collection of equivalence classes, complements of equivalence classes, other partitions on the universal set $S$, other partition in $S$, set of finite subsets of $\mathcal{S}$ and set of finite subsets of $\mathcal{S}$ of cardinality greater than 2. This is also among the many motivations for the axiomatic approach. 

A formal simplified version of the the axiomatic approach to granules is in \cite{AM1800}. The axiomatic
theory is capable of handling most contexts and is intended to permit relaxation of
set-theoretic axioms at a later stage. The axioms are considered in the framework of Rough
Y-Systems (\textsf{RYS}) that  maybe seen as a generalized form of \emph{abstract
approximation spaces} \cite{CC5} and approximation framework
\cite{CD3}. It includes relation-based RST, cover-based RST and more. These structures are
provided with enough structure so that a classical semantic domain (Meta-C) and at least
one rough semantic domain (called Meta-R) of roughly equivalent objects along with
admissible operations and predicates are associable. But the exact way of association is
not something absolute as there is no real end to recursive approximation processes of
objects. 

Paradigms do matter in the granular knowledge context in a highly non trivial way.

\section{Motivation and Examples}

Generalized transitive relations occur frequently in general information systems,
but are often not recognized as such and there is hope for improved semantics and
\textsf{KI} relative the situation for purely reflexive relation based \textsf{RST}. Not
all of the definable approximations have been investigated in even closely related
structures of general \textsf{RST}. Contamination-free semantics \cite{AM240} for the
contexts are also not known. Finally these relate to \textsf{RYS} and variants. A proper
characterization of roughly equal (requal) objects is also motivated by \cite{AM240}.

\subsection{Example-1}
Let $\S\,=\, \{a, b, c, e, f, g, h, l, n\}$ and let $R$  be a binary relation on it 
defined via\\ $R\,=\, \{(a,\,a),\,
(l,\,l),\,(n,\,n),\,(n,\,h),\,(h,\,n),\,(l,\,n),\,(g,\,c),\,(c,\,g),$\\$\,(g,\,l),\,(b,\,
g), \,(g,\,b ), \, (h, \,g ), \,(a,\,b),\,(b,\,c),\,(h,\,a),\,(a,\,c)\} $.
Then $\left\langle S,\, R\right\rangle $ is a \textsf{PRAS}.
 
If $P$ is the reflexive closure of $R$ (that is
$P\,=\, R\cup \Delta_{S}$), then $\left\langle S,\, P\right\rangle $ is a \textsf{PRAX}.
The successor neighbourhoods associated with different elements of $S$ are as follows
(\textbf{E} is a variable taking values in $S$):

\begin{tabular}{|c|c|c|c|c|c|c|c|c|c|}
\hline
\hline
\textbf{E} & $a\,$ & $ b\,$ & $c\,$ & $g\,$ & $e\,$ & $f \,$ & $h\,$ & $l\,$ & $n\,$\\
\hline
\textbf{$[E]$} & $\{a, h \}$ & $\{b, c, g \}$ & $\{b, c, g \}$ & $\{b, c,
g, h \}$ &$\{e \}$ & $\{ f\}$ & $\{h,n \}$ & $\{l,g \}$ & $\{n,l,g,h\}$\\
\hline
\textbf{$[E]_{o}$} & $\{ a\}$ & $\{b, c, g \}$ & $\{ b, c, g\}$ &$\{b,
c, g \}$ & $\{e\}$ & $\{ f\}$ & $\{h, n \}$ & $\{ l\}$ &$\{n,h \}$\\
\hline
\hline
\end{tabular}

If $A\,=\,\{a,\, h,\, f\}$, then \[A^{l}\,=\,\{a,\, h,\, f\},\; \mathrm{while}\,
A^{lo}\,=\,\{a,\, f\}\, \mathrm{and}\;A^{lo}\,\subset \, A^{l}.\]

If $F\,=\,\{l\}$, then \[F^{l}\,=\,\emptyset,\; \mathrm{while} F^{lo}\,=\,F\;
\mathrm{and}\; F^{l}\,\subset \, F^{lo}\,.\]

Now let $Z\,=\, N \cup S \cup X$, where $N$ is the set of naturals, $X$ is the set of
elements of the infinite sequences $\{x_{i}\},\,\{y_j\} $. Let $Q$ be a relation on $Z$
such that $Q\cap S^{2} = P$, $Q\cap N^{2}$ be some equivalence. Further let\\ $(\forall i
\in N) (i, x_{3i +1}),\, (x_{2i}, i), \, (x_{i}, x_{i+1}),\,(y_{i}, y_{i+1})\in Q $.  
For any $i\in N$, let $P_{i}\,=\,\{y_{k}:\,k\neq 2j \& k < i \}\,\cup\,\{x_{2j}:\, 2j <
i\}$ - this will be used in later sections. The extension of the example to involve nets
and densely ordered subsets is standard.

\subsection{Caste Hierarchies and Interaction}

The caste system and religion are among the deep-seated evils of Indian society that
often cut across socio-economic classes and level of education. For the formulation of
strategies aimed at large groups of people towards the elimination of such evils it would
be fruitful to study interaction of people belonging to different castes and religions on
different social fronts.  

Most of these castes would have multiple subcaste hierarchies in addition.
Social interactions are necessarily constrained by their type and untouchability
perception. If $x, \, y$ are two castes, then with respect to a possible social
interaction $\alpha$, people belonging to $x$ will either regard people belonging to $y$
as untouchable or otherwise. As the universality is so total, it is possible to write
$\mathbb{U}_{\alpha}xy$ to mean that $y$ is untouchable for $x$ for the interaction
$\alpha$. Usually this is a assymmetric relation and $y$ would be perceived as a
\emph{lower caste} by members of $x$ and many others.

Other predicates will of course be involved in deciding on the possibility of the social
interaction, but if $\mathbb{U}_{\alpha}xy$ then the interaction is forbidden relative
$x$. If $\alpha$ is "context of possible marriage", then the complementary relation
($\mathbb{C}_{\alpha}$ say) is a reflexive proto-transitive relation. For various other
modes of interaction similar relations may be found.

In devising remedial educational programmes targeted at mixed groups, it would be important
to understand approximate perceptions of the group and the semantics of PRAX
would be very relevant.

\section{Approximations Unlimited in a PRAX}

\begin{definition}
By a \emph{Proto Approximation Space} $S$ (\textsf{PRAS for short}), we will mean a pair
of the form $\left\langle \underline{S},\, R \right\rangle  $ with $\underline{S}$ being a
set and $R$ being a proto-transitive relation on it. If $R$ is also reflexive, then it
will be called a \emph{Reflexive Proto Approximation Space} (\textsf{PRAX}) for short).
In general, we will assume that $\underline{S}$ is infinite.
\end{definition}

If $S$ is a PRAX or a PRAS, then we will respectively denote successor neighbourhoods,
and symmetrized successor neighbourhoods generated by an
element $x \in S$ as follows:
\[[x]\,=\, \{y ;\, Ryx\}\;\; [x]_{o}\,=\, \{y ;\, Ryx\, \&\, Rxy\}.\]

\begin{definition}
If $A\subseteq S$ is an arbitrary subset of a  \textsf{PRAX} or a
\textsf{PRAS} $S$, then let
\begin{description}
\item [Upper Proto: ]{$A^{u}\,=\, \bigcup _{[x]\cap A \neq \emptyset} {[x]}.$} 
\item [Lower Proto: ]{$A^{l}\,=\, \bigcup_{[x]\subseteq A} {[x]}$.}
\item [Symmetrized Upper Proto]{$A^{uo}\,=\, \bigcup _{[x]_{o}\cap A \neq \emptyset}
{[x]_{o}}.$}
\item [Symmetrized Lower Proto]{$A^{lo}\,=\, \bigcup_{[x]_{o}\subseteq A} {[x]_{o}}. $}
\item [Point-wise Upper]{$A^{u+}\,=\, \{ x \, :\, [x]\cap A \neq \emptyset \}.$}
\item [Point-wise Lower]{$A^{l+}\,=\, \{x \,:\,[x]\subseteq A \}\,. $}
\end{description}
\end{definition}

\begin{proposition}
In a PRAX $S$ and for a subset $A\subseteq S$, all of the following hold:
\begin{mitemize}
\item {$(\forall x) \,[x]_{o}\subseteq [x]$ }
\item {It is possible that $A^{l}\,\neq \, A^{l+}$ and in general, $A^{l}\,\parallel \,
A^{lo}$.}
\end{mitemize}
\end{proposition}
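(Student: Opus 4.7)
The plan is to dispatch the first bullet by a straightforward unpacking of definitions and to settle the second bullet by reading off the required counterexamples from Example-1 already in hand.

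For the containment $[x]_o \subseteq [x]$, I would simply note that $y \in [x]_o$ means $Ryx$ and $Rxy$, so in particular $Ryx$, giving $y \in [x]$. This requires nothing about proto-transitivity or reflexivity and is purely definitional.

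For the second bullet I would use the \textsf{PRAX} from Example-1 in two different ways. To witness $A^l \neq A^{l+}$, take $A = \{a,h,f\}$. From the neighbourhood table, $[a] = \{a,h\} \subseteq A$ and $[f] = \{f\} \subseteq A$, while $[h] = \{h,n\} \not\subseteq A$; hence $A^{l+} = \{a,f\}$. On the other hand the excerpt already records $A^{l} = \{a,h,f\}$, so $A^{l} \neq A^{l+}$. For the incomparability $A^{l} \parallel A^{lo}$, I would combine two sets from the example. The same $A = \{a,h,f\}$ gives $A^{lo} = \{a,f\} \subsetneq A^{l}$, whereas taking $F = \{l\}$ gives $F^{l} = \emptyset \subsetneq \{l\} = F^{lo}$. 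Having witnesses for both strict directions in the same space shows that neither $A^{l} \subseteq A^{lo}$ nor $A^{lo} \subseteq A^{l}$ can hold in general, which is exactly the incomparability assertion.

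There is no real obstacle here, since the verification is arithmetic on the finite portion of $S$ in Example-1; the only thing to be careful about is reading the table correctly (in particular that $[h] = \{h,n\}$ is not swallowed by $A$, which is what breaks the equality $A^{l} = A^{l+}$ and simultaneously makes the lower proto approximation overshoot the symmetrized one). I would present the computations in a single short display per set, with a remark that the two witnesses together — one in which the lower proto strictly contains the symmetrized version, one in which it is strictly contained — give the stated parallel relation.
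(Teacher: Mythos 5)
Your proposal is correct and follows essentially the same route as the paper: the first bullet is immediate from the definition of $[x]_{o}$, and the second is settled by the two witnesses from Example-1 ($A=\{a,h,f\}$ with $A^{lo}=\{a,f\}\subsetneq A^{l}=\{a,h,f\}$ and $A^{l+}=\{a,f\}\neq A^{l}$, together with $F=\{l\}$ giving $F^{l}=\emptyset\subsetneq F^{lo}=F$), which is exactly the counterexample the paper's proof points to. If anything, you carry out the table computations more explicitly than the paper, which merely notes the inclusion chains run in opposite directions and defers to the example.
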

\begin{proof}
The proof of the first two parts are easy. For the third, we chase the argument up to a
trivial counter example. We have already provided a counter example for this. 
\[\bigcup_{[x]\subseteq A} {[x]}\,\subseteq \bigcup_{[x]_{o}\subseteq A} {[x]}\, \supseteq
\,\bigcup_{[x]_{o}\subseteq A} {[x]_{o}} \]
\[\bigcup_{[x]_{o}\subseteq A} {[x]_{o}} \supseteq  \bigcup_{[x]\subseteq A} {[x]_{o}}\,
\subseteq \,\bigcup_{[x]\subseteq A} {[x]}. \]\qed
\end{proof}

\begin{proposition}
For any subset $A$ of a \textsf{PRAX} $S$, $A^{uo}\,\subseteq\,
A^{u}$. 
\end{proposition}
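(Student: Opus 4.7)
The plan is to unfold the two definitions and exploit the first bullet of the preceding proposition, namely $[x]_o \subseteq [x]$ for every $x \in S$. The proof should be a direct containment argument; no use of proto-transitivity or reflexivity beyond what is already baked into the neighbourhood inclusion appears necessary.

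Concretely, I would proceed as follows. Let $z \in A^{uo}$ be arbitrary. By definition of $A^{uo}$ there exists some $x \in S$ with $[x]_o \cap A \neq \emptyset$ and $z \in [x]_o$. Using $[x]_o \subseteq [x]$, two things happen at once: first, $z \in [x]$; second, any witness $w \in [x]_o \cap A$ also lies in $[x] \cap A$, so $[x] \cap A \neq \emptyset$. Hence $[x]$ is one of the neighbourhoods taking part in the union defining $A^u$, and therefore $z \in [x] \subseteq A^u$. Since $z$ was arbitrary, $A^{uo} \subseteq A^u$.

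I would expect no real obstacle here: the statement is essentially a monotonicity remark, reflecting the fact that using symmetrized neighbourhoods $[x]_o$ both to index the union and to form its members can only shrink the upper approximation compared to the non-symmetrized version. The only thing worth flagging in the write-up is that one must apply the inclusion $[x]_o \subseteq [x]$ \emph{twice}, once at the level of the index set (to guarantee the indexing condition $[x] \cap A \neq \emptyset$) and once at the level of the summand (to get $z \in [x]$). This double use is the whole content of the proof.
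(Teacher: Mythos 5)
Your argument is correct and is essentially the paper's own proof, just written element-wise instead of as a chain of inclusions between unions: the paper likewise enlarges the index set from $\{x : [x]_o \cap A \neq \emptyset\}$ to $\{x : [x]\cap A \neq \emptyset\}$ and then replaces each summand $[x]_o$ by $[x]$, which is exactly your ``double use'' of $[x]_o \subseteq [x]$. No further comment is needed.
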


\begin{proof}
Since $[x]_{o}\cap A\neq \emptyset$, therefore
\[A^{uo}\,=\, \bigcup _{[x]_{o}\cap A \neq \emptyset} {[x]_{o}} \,\subseteq \, \bigcup
_{[x] \cap A \neq \emptyset} {[x]_{o}}\, \subseteq\,A^{uo}\,=\, \bigcup _{[x]\cap A \neq
\emptyset} {[x]}\,=\, A^{u}.\]
\qed
\end{proof}

\begin{definition}
In a PRAX $S$, if $X$ is an approximation operator, then by a $X$-\emph{definite element},
we will mean a subset $A$ satisfying $A^{X}\,=\, A$. The set of all $X$-definite elements
will be denoted by $\delta_{X}(S)$, while the set of $X$ and $Y$-definite elements ($Y$
being another approximation operator) will be denoted by $\delta_{XY}(S)$. 
\end{definition}

\begin{definition}
In a \textsf{PRAX} $S$, a subset $A$ will be said to be respectively \emph{lower
proto-definite} or \emph{upper proto definite} if and only if $A^{l}\,=\, A$ or
$A^{u}\,=\, A$ holds respectively. $A$ will be said to be \emph{proto-definite} if and
only if it is both upper and lower proto-definite. 
\end{definition}

\begin{proposition}
In a \textsf{PRAX} $S$,	$\delta_{u}(S)$ is a complete sublattice of $\wp (S)$ with respect
to inclusion.
\end{proposition}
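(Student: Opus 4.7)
The plan is to verify three structural properties of the operator $u$ on $\wp(S)$, from which the complete sublattice claim follows almost mechanically: (i) monotonicity, (ii) extensivity $A \subseteq A^u$, and (iii) commutation with arbitrary unions $(\bigcup_i A_i)^u = \bigcup_i A_i^u$. Monotonicity is immediate from the definition, since $A \subseteq B$ implies that every neighbourhood $[x]$ meeting $A$ also meets $B$. Commutation with unions reduces to a routine reindexing: the set of neighbourhoods hitting $\bigcup_i A_i$ is exactly the union over $i$ of the sets of neighbourhoods hitting each $A_i$. Extensivity is the one place where the reflexivity hypothesis of a \textsf{PRAX} is used: if $a\in A$ then $a\in [a]$ by reflexivity, so $[a]\cap A\neq\emptyset$, whence $a\in[a]\subseteq A^{u}$.

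Given (i)--(iii), closure under arbitrary unions is straightforward. If $\{A_i\}_{i\in I}\subseteq \delta_u(S)$, then by (iii), $(\bigcup_i A_i)^u = \bigcup_i A_i^u = \bigcup_i A_i$, so $\bigcup_i A_i \in \delta_u(S)$. Closure under arbitrary intersections uses the other two properties: extensivity gives $\bigcap_i A_i \subseteq (\bigcap_i A_i)^u$, while monotonicity applied to $\bigcap_i A_i \subseteq A_j$ yields $(\bigcap_i A_i)^u \subseteq A_j^u = A_j$ for each $j$, so $(\bigcap_i A_i)^u \subseteq \bigcap_j A_j$; combining both inclusions gives $(\bigcap_i A_i)^u = \bigcap_i A_i$.

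Finally, I would verify that the bounds of $\wp(S)$ lie in $\delta_u(S)$ so that the embedding preserves top and bottom and the nullary joins/meets (making it genuinely a complete sublattice rather than merely a complete lattice in its own right). The minimum $\emptyset$ is vacuously upper proto-definite, and for the maximum, reflexivity gives $x\in[x]$ for every $x$, so $S^u = \bigcup_{[x]\cap S\neq\emptyset}[x] = \bigcup_x [x] \supseteq S$, while trivially $S^u\subseteq S$, so $S\in\delta_u(S)$.

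The only subtle point is recognizing that although the intersection operator $u$ does not commute with intersections in general, one does not need such commutation here; one only needs the sandwich $\bigcap_i A_i \subseteq (\bigcap_i A_i)^u \subseteq \bigcap_i A_i^u$, and both inclusions come for free from monotonicity plus extensivity. Thus the argument is essentially a closure-operator observation, and the essential use of the \textsf{PRAX} hypothesis (over the more general \textsf{PRAS}) is reflexivity, which powers extensivity; proto-transitivity itself is not invoked in this particular lemma.
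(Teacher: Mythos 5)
Your proof is correct, and it is in fact more thorough than the one in the paper. The paper's own argument is a one-line observation that reflexivity makes $\delta_u(S)$ closed under \emph{binary} unions and intersections, which as written only establishes that $\delta_u(S)$ is a sublattice, not a \emph{complete} one. You supply exactly what is needed to close that gap: the decomposition into monotonicity, extensivity (the sole point where reflexivity, i.e.\ the \textsf{PRAX} rather than \textsf{PRAS} hypothesis, enters), and commutation of $u$ with arbitrary unions, from which closure under arbitrary joins is immediate and closure under arbitrary meets follows from the sandwich $\bigcap_i A_i \subseteq (\bigcap_i A_i)^u \subseteq \bigcap_i A_i^u = \bigcap_i A_i$ without needing $u$ to commute with intersections. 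Your explicit check that $\emptyset$ and $S$ are upper proto-definite also handles the nullary joins and meets, which the paper does not mention. The underlying idea (reflexivity forces $A \subseteq A^u$, and $[x] \subseteq A^u$ whenever $[x]$ meets $A$) is the same in both arguments; what your version buys is that the word ``complete'' in the statement is actually earned, and your closing remark that proto-transitivity plays no role here is a correct and worthwhile observation that the paper leaves implicit.
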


\begin{proof}
As $R$ is reflexive, if $A,\, B $ are upper proto definite, then $A\cup B$ and $A\cap B$
are both upper proto definite. So the result holds.  
\qed
\end{proof}

\begin{proposition}
In a \textsf{PRAX} $S$, $\delta_{lo}(S)\,=\, \delta_{uo}(S)$.
\end{proposition}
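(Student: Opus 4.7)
The plan is to reduce the statement to a familiar fact about approximations induced by an equivalence relation. First I would note that for a reflexive proto-transitive $R$, the earlier proposition tells us $\tau(R) = R \cap R^{-1}$ is an equivalence relation on $\underline{S}$. The symmetrized neighbourhood $[x]_o = \{y : Rxy \ \&\ Ryx\}$ is exactly the $\tau(R)$-class of $x$, so the family $\{[x]_o : x \in S\}$ is a partition of $S$; moreover reflexivity gives $x \in [x]_o$ for every $x$.

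Once this is in hand, the proof becomes the classical observation that the lower and upper approximations associated with an equivalence relation define the same definite sets, namely the unions of equivalence classes. Concretely, I would show the following chain of equivalences for an arbitrary $A \subseteq S$:
\begin{center}
$A^{lo} = A \iff A$ is a union of $\tau(R)$-classes $\iff A^{uo} = A$.
\end{center}

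For the first equivalence, if $A^{lo} = A$ and $a \in A$, then $a \in [x]_o$ for some $x$ with $[x]_o \subseteq A$; but $[x]_o = [a]_o$ because these are equivalence classes, so $[a]_o \subseteq A$, and thus $A = \bigcup_{a \in A} [a]_o$. Conversely, if $A$ is a union of $\tau(R)$-classes, then each $[a]_o$ with $a \in A$ satisfies $[a]_o \subseteq A$, which forces $A \subseteq A^{lo}$; the reverse inclusion $A^{lo} \subseteq A$ is immediate from the definition. The second equivalence is dual: if $A$ is a union of classes and $[x]_o \cap A \neq \emptyset$, pick $y$ in the intersection so that $[x]_o = [y]_o \subseteq A$, giving $A^{uo} \subseteq A$; and reflexivity yields $A \subseteq A^{uo}$. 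Conversely, $A^{uo} = A$ together with $a \in [a]_o \cap A$ forces $[a]_o \subseteq A$.

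There is no real obstacle here; the only thing to be careful about is that the argument genuinely uses both reflexivity (to get $a \in [a]_o$, hence $A \subseteq A^{uo}$ and membership of $a$ in some class contained in $A$) and proto-transitivity of $R$ with reflexivity (to ensure $\tau(R)$ is transitive, so that $[x]_o$'s really do partition $S$ rather than merely cover it). Without the partition structure the implication ``$[x]_o \cap A \neq \emptyset$ with $y$ in the intersection implies $[x]_o = [y]_o$'' would fail, and the equality $\delta_{lo}(S) = \delta_{uo}(S)$ would not follow from this short argument.
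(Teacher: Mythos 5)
Your proof is correct and complete: the paper states this proposition without proof, and your reduction --- using the earlier proposition that reflexivity plus proto-transitivity makes $\tau(R)$ an equivalence, so that the $[x]_o$ partition $S$ and both $\delta_{lo}(S)$ and $\delta_{uo}(S)$ coincide with the family of unions of $\tau(R)$-classes --- is exactly the intended argument. You are also right to flag that the partition property (not mere covering) is what makes the $uo$-direction work.
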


\begin{theorem}
In a \textsf{PRAX} $S$, the following hold:
\begin{mitemize}
\item {$\delta_{u}(S)\,\subseteq\,\delta_{uo}(S)  $. }
\item {$\delta_{l}(S)\,\parallel\,\delta_{lo}(S)$ in general. }
\end{mitemize}
\end{theorem}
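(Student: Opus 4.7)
For the first inclusion, the plan is to combine the preceding proposition $A^{uo}\subseteq A^u$ with a general extensivity property of the symmetrized upper operator. Reflexivity of $R$ gives $Rxx$ for every $x\in S$, hence $x\in [x]_o$, so for every $A\subseteq S$ and every $x\in A$ one has $[x]_o\cap A\ni x$ and therefore $x\in [x]_o\subseteq A^{uo}$; i.e.\ $A\subseteq A^{uo}$ holds in any \textsf{PRAX}. Consequently, if $A\in\delta_u(S)$ then $A\subseteq A^{uo}\subseteq A^u=A$, whence $A^{uo}=A$ and $A\in\delta_{uo}(S)$.

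For the parallelism $\delta_l(S)\parallel\delta_{lo}(S)$, I would read off two witnesses from Example~1. Taking $A=\{a,h,f\}$ one already has $A^l=A$ but $A^{lo}=\{a,f\}\ne A$, so $A\in\delta_l(S)\setminus\delta_{lo}(S)$. Taking $F=\{l\}$ one has $F^l=\emptyset$ but $F^{lo}=F$, so $F\in\delta_{lo}(S)\setminus\delta_l(S)$. Together these show that neither of $\delta_l(S),\delta_{lo}(S)$ is contained in the other, i.e.\ they are incomparable.

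I do not expect any genuine obstacle. The only subtlety is that the proof of the first bullet leans crucially on reflexivity (not merely on proto-transitivity), and that on the lower side there is no analogue of the monotone relationship $A^{uo}\subseteq A^u$; indeed the earlier proposition explicitly warned that $A^l$ and $A^{lo}$ are in general incomparable. It is precisely this asymmetry that forces the second bullet to be settled by exhibiting concrete counterexamples rather than by a general set-theoretic chain, and the computations already recorded in Example~1 are tailored to supply both directions at once.
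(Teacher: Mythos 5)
Your proposal is correct, and both halves check out against the paper. For the first bullet the paper argues directly with neighbourhoods: from $A^{u}=A$ it deduces that $(\forall x\in A^{c})\,[x]\cap A=\emptyset$, hence $(\forall x\in A^{c})\,[x]_{o}\cap A=\emptyset$, so that $A^{uo}\subseteq A$, and then closes with the same extensivity fact $A\subseteq A^{uo}$ that you use. Your route instead sandwiches $A\subseteq A^{uo}\subseteq A^{u}=A$ by citing the already-proved proposition $A^{uo}\subseteq A^{u}$; this is shorter, reuses an existing lemma rather than re-deriving the containment from the neighbourhood structure, and makes the dependence on reflexivity (needed only for $x\in[x]_{o}$) fully explicit. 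The trade-off is minimal: the paper's version exposes the combinatorial reason ($[x]_{o}\subseteq[x]$ kills any contribution from $A^{c}$), while yours is the cleaner order-theoretic argument. For the second bullet the paper's displayed proof is silent and merely relies on the counterexample "already provided"; your explicit extraction of the two witnesses $A=\{a,h,f\}$ (in $\delta_{l}\setminus\delta_{lo}$) and $F=\{l\}$ (in $\delta_{lo}\setminus\delta_{l}$) from Example~1 is exactly the intended argument, and your closing remark correctly identifies why no general inclusion chain is available on the lower side.
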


\begin{proof}
If $A\,\in\,\delta_{u} $, then $(\forall x \in A ) [x]\subseteq A$ and 
$(\forall x \in A^{c}) [x]\cap A = \emptyset$.

So $(\forall x \in A^{c})\, [x]_{o}\cap A = \emptyset $.
But as $A\,\subseteq\,A^{uo}$ is necessary, we must have $A\in \delta_{uo}$.
\qed
\end{proof}

\begin{theorem}
In a \textsf{PRAS} $S$, it is possible that $\delta_{u}\,\nsubseteq \delta_{uo}$.
\end{theorem}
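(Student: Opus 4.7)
The plan is to exhibit a concrete \textsf{PRAS} together with a subset $A$ which is upper proto-definite but not symmetrized upper proto-definite. The point is that the previous theorem leaned crucially on reflexivity through the step ``$A \subseteq A^{uo}$ is necessary''; once reflexivity is dropped, this inclusion can fail badly and then a set that is a fixed point of $(\cdot)^u$ need not even be contained in its symmetrized upper approximation.

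First I would look for the smallest asymmetric relation that still satisfies proto-transitivity vacuously, so that verifying the \textsf{PRAS} axiom costs nothing. A natural candidate is $\underline{S} = \{a,b\}$ with $R = \{(a,b)\}$: proto-transitivity quantifies over triples of pairwise distinct elements, and $|\underline{S}| = 2$ makes the hypothesis vacuous. Next I would compute the successor neighbourhoods: $[a] = \emptyset$, $[b] = \{a\}$, and both symmetrized neighbourhoods $[a]_{o}, [b]_{o}$ are empty because $R^{-1} \cap R = \emptyset$.

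Then I would take $A = \{a\}$ and check the two approximations directly. Only $x = b$ contributes to $A^{u}$ (since $[b]\cap A = \{a\} \neq \emptyset$), giving $A^{u} = [b] = \{a\} = A$, so $A \in \delta_{u}(S)$. On the other hand no $x$ satisfies $[x]_{o}\cap A \neq \emptyset$, so $A^{uo} = \emptyset \neq A$, whence $A \notin \delta_{uo}(S)$. This produces the required element of $\delta_{u}(S) \setminus \delta_{uo}(S)$ and establishes the theorem.

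I do not expect any real obstacle; the only issue worth flagging is making sure the chosen $R$ is genuinely proto-transitive rather than merely ``small''. Using a two-element universe sidesteps this concern at once, but if a more substantive example is desired for illustration one can instead carve a sub-relation out of the relation $R$ of Example-1 (before taking reflexive closure), selecting an element $a$ whose in-neighbours under $R$ are asymmetric so that $[a]$ is nontrivial while $[a]_{o} = \emptyset$; the same type of argument then goes through.
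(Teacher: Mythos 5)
Your proof is correct, and it is worth noting that the paper itself states this theorem without any proof, so your counterexample actually supplies what the paper omits. The two-element \textsf{PRAS} $\underline{S}=\{a,b\}$, $R=\{(a,b)\}$ checks out: $\tau(R)=R\cap R^{-1}=\emptyset$ is trivially weakly transitive, so $R$ is proto-transitive; $[a]=\emptyset$, $[b]=\{a\}$, $[a]_{o}=[b]_{o}=\emptyset$; hence $A=\{a\}$ satisfies $A^{u}=[b]=A$ while $A^{uo}=\emptyset\neq A$, giving $A\in\delta_{u}(S)\setminus\delta_{uo}(S)$. Your diagnosis of where the \textsf{PRAX} argument breaks is also the right one: the paper's proof of $\delta_{u}(S)\subseteq\delta_{uo}(S)$ for the reflexive case rests on the inclusion $A\subseteq A^{uo}$, which is exactly what an asymmetric, irreflexive relation destroys. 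Two cosmetic points: the paper's blanket convention is that $\underline{S}$ is infinite, so if you want to be fully consistent with it you should pad your universe with isolated points (empty neighbourhoods, so nothing changes in the computation); and your fallback suggestion of extracting a sub-relation from Example-1 is unnecessary given that the minimal example is already complete.
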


\begin{remark}
$A^{u+},\, A^{l+}$ have relatively been more commonly used in the literature and 
have also been the only kind of approximation studied in \cite{JJ} for example (the
inverse relation is also considered from the same perspective). The main
features of these is stated in the following theorem. 
\end{remark}

\begin{theorem}
If $u+,\, l+ $ are treated as self maps on the power-set $\wp (S)$, $S$ being a PRAX or a
\textsf{PRAS} then all of the following hold:
\begin{mitemize}
\item {$(\forall x)\, x^{cl+}=x^{u+c},\,\,x^{cu+}=x^{l+c} $ - that is $l+$ and $u+$ are
mutually dual} 
\item {$l+,\, u+$ are monotone.}
\item {$l+$ is a complete $\wedge$-morphism, while $u+$ is a $\vee$-morphism.}
\item {$\partial (x) = \partial(x^{c})$, where partial stands for the boundary operator.}
\item {$\Im (u+)$ is an interior system while $\Im (l+)$ is a closure system.}
\item {$\Im (u+)$ and $\Im (l+)$ are dually isomorphic lattices.}
\end{mitemize}
\end{theorem}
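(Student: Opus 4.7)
The plan is to verify the six items in order, leaning on the duality in item~(1) to collapse work for the later parts. For (1), I unfold definitions: $y\in A^{cl+}$ iff $[y]\subseteq A^{c}$ iff $[y]\cap A=\emptyset$ iff $y\notin A^{u+}$ iff $y\in A^{u+c}$, which gives $A^{cl+}=A^{u+c}$. The identity $A^{cu+}=A^{l+c}$ then follows by substituting $A^{c}$ for $A$ and using double complementation. This argument uses only basic set-theoretic manipulation, so it applies uniformly to \textsf{PRAX} and \textsf{PRAS} without appeal to reflexivity or proto-transitivity.

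Items (2)--(4) fall out of (1) and direct bookkeeping. Monotonicity is immediate from the defining formulas, since $A\subseteq B$ preserves both the containment $[y]\subseteq A$ and the non-emptiness of $[y]\cap A$. For (3), $y\in(\bigcap_{i}A_{i})^{l+}$ iff $[y]\subseteq A_{i}$ for every $i$ iff $y\in\bigcap_{i}A_{i}^{l+}$, establishing the complete $\wedge$-morphism property of $l+$; an analogous argument exchanging intersections for unions and the universal for the existential quantifier shows $(\bigcup_{i}A_{i})^{u+}=\bigcup_{i}A_{i}^{u+}$. For (4), taking the boundary to be $\partial(A)=A^{u+}\setminus A^{l+}$ and applying (1) yields $\partial(A^{c})=A^{cu+}\cap A^{cl+c}=A^{l+c}\cap A^{u+}=\partial(A)$.

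Item (5) is then a corollary of (3): $\Im(u+)$ is closed under arbitrary unions because $\bigcup_{i}A_{i}^{u+}=(\bigcup_{i}A_{i})^{u+}\in\Im(u+)$, so it is an interior system in the sense of a family closed under arbitrary joins in $\wp(S)$; dually, $\Im(l+)$ is closed under arbitrary intersections and hence a closure system. For (6), complementation $B\mapsto B^{c}$ is an order-reversing involution on $\wp(S)$, and by (1) it restricts to a bijection between $\Im(u+)$ and $\Im(l+)$, since $A^{u+c}=A^{cl+}\in\Im(l+)$ and conversely. Being an order-reversing bijection, this restriction exhibits the two lattices as dually isomorphic.

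Essentially every step is routine unpacking of the two pointwise definitions, so I do not expect any real obstacle. The only points needing care are fixing conventions (interior/closure system here meaning closed under arbitrary unions/intersections, and $\partial$ defined as $A^{u+}\setminus A^{l+}$) and noticing that proto-transitivity plays no role whatsoever in this theorem, which is consistent with the remark preceding it that $u+$ and $l+$ have been studied at a considerably more general level in the literature.
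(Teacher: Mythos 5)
Your proof is correct: the duality $A^{cl+}=A^{u+c}$, the complete $\wedge$/$\vee$-morphism properties, the boundary identity, and the interior/closure-system and dual-isomorphism claims all follow exactly as you unpack them from the pointwise definitions of $l+$ and $u+$, and you are right that neither reflexivity nor proto-transitivity is used. The paper supplies no proof for this theorem at all --- it is presented, after the preceding remark, as a summary of standard properties of these operators from the literature --- so your direct verification is precisely the routine argument the author is taking for granted.
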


\begin{theorem}
In a \textsf{PRAX} $S$, $(\forall A)\, A^{l+} \subseteq A^{l},\:\:A^{u+} \subseteq A^{u}
$. 
\end{theorem}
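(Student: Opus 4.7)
The plan is to exploit reflexivity of $R$ directly, since the two pointwise approximations $A^{l+}, A^{u+}$ collect indices $x$ whose neighbourhood $[x]$ satisfies the relevant condition, while $A^l, A^u$ collect the entire neighbourhoods $[x]$ indexed in the same way. Reflexivity forces $x \in [x]$, so each index already lies inside the chunk it indexes.

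More concretely, I would first record the standing fact that in a PRAX one has $Rxx$ for every $x \in \underline{S}$, hence $x \in [x] = \{y : Ryx\}$. Then I would handle the two inclusions in parallel. For the lower inclusion, I take $x \in A^{l+}$; by definition $[x] \subseteq A$, so $[x]$ is one of the sets appearing in the union $\bigcup_{[y] \subseteq A}[y]$ that defines $A^l$ (witness $y = x$). Since $x \in [x]$, we conclude $x \in A^l$, establishing $A^{l+} \subseteq A^l$. For the upper inclusion, I take $x \in A^{u+}$; by definition $[x] \cap A \neq \emptyset$, so $[x]$ appears in the union $\bigcup_{[y]\cap A \neq \emptyset}[y]$ defining $A^u$, and again $x \in [x]$ places $x$ in $A^u$.

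There is really no serious obstacle here: the argument is a one-line use of reflexivity in each case, and it in fact clarifies why these inclusions can fail in a general \textsf{PRAS}, where $Rxx$ need not hold and the index $x$ may not belong to its own successor neighbourhood. The only care needed is to keep the notational convention $[x] = \{y : Ryx\}$ straight, so that $x \in [x]$ is indeed equivalent to $Rxx$ and not to some other variant; with that settled, the two displayed inclusions follow immediately from the definitions of $A^{l+}, A^{u+}, A^l, A^u$ given just above.
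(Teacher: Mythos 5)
Your proof is correct and takes essentially the same route as the paper's: in both cases one observes that $[x]\subseteq A$ (resp.\ $[x]\cap A\neq\emptyset$) makes $[x]$ one of the blocks in the union defining $A^{l}$ (resp.\ $A^{u}$), and then places $x$ inside that block. The only difference is that you make explicit the appeal to reflexivity ($Rxx$, hence $x\in[x]$) that the paper's proof uses silently, which is a worthwhile clarification but not a different argument.
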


\begin{proof}
\begin{mitemize}
\item {If $x\in A^{l+}$, then $[x]\subseteq A$ and so $[x]\subseteq A^{l},\, x\in A^{l}$.}
\item {If $x\in A^{l}$, then $(\exists y \in A) [y]\subseteq A,\, Rxy$. But it is possible
that $[x] \nsubseteq A$, therefore it is possible that $x\notin A^{l+}$ and
$A^{l}\nsubseteq A^{l+}$.}
\item {If $x\in A^{u+}$, then $[x]\cap A \neq \emptyset$, so $x\in A^{u}$.}
\item {So $A^{u+}\subseteq A^{u}$.}
\item {Note that $x\in A^{u}$, \ifof $(\exists z\in S) \,x\in [z],\, [z]\cap A\neq
\emptyset $, but this does not imply $x\in A^{u+}$. }
\end{mitemize}
\end{proof}

\begin{theorem}
In a \textsf{PRAX} $S$, $(\forall A, B \in \wp(S))\, A^{l}\cup B^{l}\,\subseteq (A\cup
B)^{l}$. 
\end{theorem}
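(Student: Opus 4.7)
The plan is to prove monotonicity of the lower proto operator first (at least implicitly), since the desired containment is an immediate consequence. Concretely, I would fix an arbitrary element $z \in A^{l} \cup B^{l}$ and show $z \in (A \cup B)^{l}$ by tracing through the definition $X^{l} = \bigcup_{[x] \subseteq X} [x]$.

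First I would handle the case $z \in A^{l}$. By definition, there exists some $x \in S$ with $[x] \subseteq A$ and $z \in [x]$. Since $A \subseteq A \cup B$, the witnessing neighbourhood satisfies $[x] \subseteq A \cup B$ as well, so $[x]$ appears in the union defining $(A \cup B)^{l}$, giving $z \in [x] \subseteq (A \cup B)^{l}$. The case $z \in B^{l}$ is symmetric, just replacing $A$ by $B$ in the above argument. This covers all of $A^{l} \cup B^{l}$, yielding the claimed inclusion.

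There is essentially no obstacle: the argument is just the observation that the operator $X \mapsto X^{l}$, defined as a union of neighbourhoods indexed by the condition $[x] \subseteq X$, is monotone in $X$, and monotone operators always satisfy $f(A) \cup f(B) \subseteq f(A \cup B)$. I would remark in passing that the reverse inclusion does \emph{not} hold in general for a \textsf{PRAX}, because a neighbourhood $[x]$ may be contained in $A \cup B$ without being contained in either $A$ or $B$ separately (it can straddle the two), so equality should not be expected; this is consistent with the earlier examples where $l$ behaves quite differently from the pointwise operator $l+$.
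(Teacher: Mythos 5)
Your proof is correct and follows essentially the same route as the paper's: both arguments come down to the observation that $[x]\subseteq A$ (or $[x]\subseteq B$) forces $[x]\subseteq A\cup B$, so every neighbourhood contributing to $A^{l}$ or $B^{l}$ also contributes to $(A\cup B)^{l}$; the paper merely writes this as a chain of implications starting from $x\in (A\cup B)^{l}$ and noting the final clause is implied by $x\in A^{l}\cup B^{l}$. Your closing remark that the reverse inclusion can fail (a neighbourhood may straddle $A$ and $B$) matches the paper's implicit one-directional claim.
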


\begin{proof}
For any $A, B\in \wp{S}$, $x\in (A\cup B)^{l}$
\begin{impemize}
\item {$(\exists y\in (A\cup B) )\, x\in [y]\, \subseteq \, A\cup B $.}
\item {$(\exists y\in A )\, x\in [y]\, \subseteq \, A\cup B $ or $(\exists y\in  B
)\, x\in [y]\, \subseteq \, A\cup B $.}
\item {$(\exists y\in A) \, x\in [y]\, \subseteq \, A $ or $(\exists y\in A) \, x\in [y]\,
\subseteq \, B $ or $(\exists y\in B) \, x\in [y]\, \subseteq \, A $ or $(\exists y\in B)
\, x\in [y]\, \subseteq \, B $}
\end{impemize}
 
Clearly the last statement is implied by $x\in A^{l} \cup B^{l}$.
\end{proof}

\begin{theorem}
In a \textsf{PRAX} $S$,  $(\forall A, B \in \wp(S))\, (A\cap
B)^{l}\,\subseteq\, A^{l}\cap B^{l}$. 
\end{theorem}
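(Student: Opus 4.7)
The plan is to unpack the definition of the lower proto-approximation and verify the inclusion by exhibiting a single witness granule. Recall that $(A\cap B)^{l}\,=\,\bigcup_{[y]\subseteq A\cap B}[y]$, so a point $z\in (A\cap B)^{l}$ means there exists some $y\in S$ with $z\in[y]$ and $[y]\subseteq A\cap B$.

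First I would fix such a $y$ and observe the trivial set-theoretic fact that $[y]\subseteq A\cap B$ yields both $[y]\subseteq A$ and $[y]\subseteq B$. Next I would note that the same $y$ then serves as a witness for $z\in A^{l}$, since $z\in[y]\subseteq A$ puts $[y]$ into the union defining $A^{l}$; symmetrically, $y$ witnesses $z\in B^{l}$. Combining these gives $z\in A^{l}\cap B^{l}$, which is exactly the required containment. The argument is essentially a chain of implications identical in form to the one used in the previous theorem (for $\cup$), but run ``componentwise'' on a single witness rather than on a disjunction of witnesses.

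There is no real obstacle here; the only subtle point worth flagging is that the converse inclusion cannot be expected in a \textsf{PRAX}. Indeed, if $z\in A^{l}\cap B^{l}$ there are witnesses $y_{1}, y_{2}$ with $z\in[y_{1}]\subseteq A$ and $z\in[y_{2}]\subseteq B$, but without transitivity (or at least symmetry of $R$ on the relevant elements) there is no mechanism to produce a common witness $y$ with $[y]\subseteq A\cap B$. So the proof should stop at $\subseteq$ and not attempt equality. Neither reflexivity nor proto-transitivity of $R$ is actually invoked in the argument, so the same inclusion in fact holds in any \textsf{PRAS}; I would mention this in a short remark only if space permits, and otherwise simply present the three-line verification.
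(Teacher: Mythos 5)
Your proof is correct and follows essentially the same route as the paper's: unpack $z\in(A\cap B)^{l}$ to a single witness neighbourhood $[y]\subseteq A\cap B$, split the inclusion componentwise, and reuse that witness for both $A^{l}$ and $B^{l}$. Your added observations --- that the reverse inclusion fails for want of a common witness, and that no property of $R$ is used so the inclusion holds in any \textsf{PRAS} --- are accurate and consistent with the paper's closing remark that the converse is not true in general.
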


\begin{proof}
For any $A, B\in \wp{S}$, $x\in (A\cap B)^{l}$
\begin{impemize}
\item {$ x\in \, A\cap B $}
\item {$(\exists y\in A\cap B )\, x\in [y]\, \subseteq \, A\cap B $ and $x\,\in\, A,\;
x\,\in\, B$}
\item {$(\exists y\in A) \, x\in [y]\, \subseteq \, A $ and $(\exists y\in B) \, x\in
[y]\,\subseteq \,B $}
\end{impemize}
Clearly the last statement implies  $x\in A^{l} \& x\in  B^{l}$, but the converse is not
true in general. 
\qed
\end{proof}
 
\begin{theorem}
In a \textsf{PRAX} $S$, all of the following hold:
\begin{enumerate}
\item {$(\forall A, B\in \wp (S))\, (A\cap B)^{l+}\,=\, A^{l+}\cap B^{l+}$}
\item {$(\forall A, B\in \wp (S))\, A^{l+}\cup B^{l+}\, \subseteq (A\cup B)^{l+}$}
\item {$(\forall A \in \wp (S))\, (A^{l+})^{c} = (A^{c})^{u+} $} 
\end{enumerate}
\end{theorem}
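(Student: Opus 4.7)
The plan is to reduce each of the three items to direct unpacking of the pointwise definitions $A^{l+}=\{x:[x]\subseteq A\}$ and $A^{u+}=\{x:[x]\cap A\neq\emptyset\}$. None of the three assertions actually needs proto-transitivity, or even reflexivity, of $R$: they are purely Boolean facts about the map $x\mapsto[x]$, so the strategy is to keep the argument at the level of neighbourhoods.

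For item (1), I would argue by a single chain of equivalences: $x\in(A\cap B)^{l+}$ iff $[x]\subseteq A\cap B$ iff $[x]\subseteq A$ and $[x]\subseteq B$ iff $x\in A^{l+}\cap B^{l+}$. The only set-theoretic fact used is that $X\subseteq A\cap B$ decomposes as $X\subseteq A$ together with $X\subseteq B$.

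For item (2), I would invoke monotonicity of $l+$ (already recorded in the omnibus theorem on $u+,l+$). Since $A\subseteq A\cup B$ and $B\subseteq A\cup B$, monotonicity gives $A^{l+}\subseteq(A\cup B)^{l+}$ and $B^{l+}\subseteq(A\cup B)^{l+}$, and taking the union yields the claim. I would add a brief remark explaining why the reverse inclusion fails in general: if $[x]$ meets both $A\setminus B$ and $B\setminus A$ while still being contained in $A\cup B$, then $x\in(A\cup B)^{l+}$ but $x\notin A^{l+}\cup B^{l+}$; Example-1 supplies plenty of such configurations.

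For item (3), I would proceed by negation: $x\in(A^{l+})^{c}$ iff $[x]\not\subseteq A$ iff there exists $y\in[x]$ with $y\notin A$ iff $[x]\cap A^{c}\neq\emptyset$ iff $x\in(A^{c})^{u+}$. This is essentially the duality item already listed in the omnibus theorem on $u+,l+$, so I would either cite it or reprove it in one line here. The only real care required throughout is to avoid conflating $l+,u+$ (pointwise, defined via membership of $x$) with their set-valued companions $l,u$ (defined via unions of neighbourhoods), since the earlier results show these behave differently; but because this theorem lives entirely inside the $u+,l+$ world, that distinction poses no obstacle, and there is no single hard step to flag.
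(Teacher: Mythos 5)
Your proposal is correct and follows essentially the same route as the paper: item (1) by the same chain of equivalences, item (3) by the same negation/duality argument, and item (2) differing only in that you cite monotonicity of $l+$ where the paper unpacks $[x]\subseteq A$ or $[x]\subseteq B$ directly (the same one-line computation). Your added observations --- that proto-transitivity is never used and that the reverse inclusion in (2) fails when $[x]$ meets both $A\setminus B$ and $B\setminus A$ --- are accurate but not needed.
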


\begin{proof}
\begin{enumerate}
\item {$x\in (A\cap B)^{l+}$ 
 \begin{impemize}
 \item {$[x]\subseteq A\cap B $} 
 \item {$[x]\subseteq A $ and $[x]\subseteq B $}
 \item {$x\in xA^{l+} $ and $x\in B^{l+}$.}
 \end{impemize}}
\item {$x \in A^{l+}\cup B^{l+} $
  \begin{impemize}
  \item {$[x]\subseteq A^{l+} $ or $[x]\subseteq B^{l+}$}
  \item {$[x]\subseteq A $ or $[x]\subseteq B$}
  \end{impemize}
$\Rightarrow [x]\subseteq A\cup B\, \Leftrightarrow\, x\in (A\cup B)^{l+} $.}
\item {$z\in A^{l+c}$
 \begin{impemize}
 \item {$z\,\notin\, A^{l+}$}
 \item {$[z]\,\nsubseteq A $}
 \item {$z\cap A^{c}\,\neq \,\emptyset $}
 \end{impemize}
}
\end{enumerate}
\qed
\end{proof}

\begin{theorem}
In a \textsf{PRAX} $S$, all of the following hold:
\begin{enumerate}
\item {$(A\cup B)^{u}\,=\, A^{u}\cup B^{u}$}
\item {$(A\cap B)^{u} \,\subseteq\, A^{u}\cap B^{u}$} 
\end{enumerate}
\end{theorem}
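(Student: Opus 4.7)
The plan is to unfold the definition $A^{u} = \bigcup_{[x]\cap A \neq \emptyset}[x]$ and exploit the elementary distributive behaviour of the relation ``$[x]\cap(\,\cdot\,)\neq\emptyset$'' with respect to union and intersection. Both parts follow by partitioning the indexing condition over the neighbourhoods $[x]$; no use of reflexivity or proto-transitivity is actually needed beyond the formal definition of $A^{u}$, so I will not invoke the special structure of a \textsf{PRAX}.

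For part (1), I would first observe the set-theoretic equivalence
\[
[x]\cap (A\cup B) \neq \emptyset \;\Longleftrightarrow\; [x]\cap A\neq\emptyset \;\text{or}\; [x]\cap B\neq\emptyset.
\]
This lets me split the indexing set of the union defining $(A\cup B)^{u}$ as the union of the two indexing sets defining $A^{u}$ and $B^{u}$, which immediately gives
\[
(A\cup B)^{u} = \bigcup_{[x]\cap A\neq\emptyset}[x]\;\cup\;\bigcup_{[x]\cap B\neq\emptyset}[x] = A^{u}\cup B^{u}.
\]
Both inclusions are obtained from the same decomposition, so equality is immediate.

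For part (2), I use the one-sided implication
\[
[x]\cap (A\cap B) \neq \emptyset \;\Longrightarrow\; [x]\cap A\neq\emptyset \;\text{and}\; [x]\cap B\neq\emptyset.
\]
Thus every neighbourhood $[x]$ indexed in the union defining $(A\cap B)^{u}$ appears simultaneously in the union defining $A^{u}$ and in the union defining $B^{u}$. Hence each such $[x]$ is contained in $A^{u}\cap B^{u}$, and taking the union over all such $x$ yields $(A\cap B)^{u}\subseteq A^{u}\cap B^{u}$.

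There is no real obstacle here; the only subtle point is to notice that the converse implication in part (2) fails in general (a neighbourhood meeting both $A$ and $B$ need not meet $A\cap B$), which is exactly why one only obtains an inclusion rather than an equality. A one-line counterexample drawn from the Example-1 table (for instance taking $A=\{b\}$, $B=\{c\}$, which share the neighbourhood $\{b,c,g\}$ despite being disjoint) could be appended as a remark to show that the reverse inclusion genuinely fails.
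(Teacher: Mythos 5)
Your proof is correct and follows essentially the same route as the paper: part (1) is proved in both by splitting the indexing condition $[x]\cap(A\cup B)\neq\emptyset$ into the disjunction over $A$ and $B$, and your part (2) just unfolds directly the same one-sided implication that the paper packages as ``monotonicity of $u$''. The appended counterexample with $A=\{b\}$, $B=\{c\}$ is a valid extra showing the reverse inclusion in (2) fails.
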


\begin{proof}
\begin{enumerate}
\item {$x\,\in\, (A\cup B)^{u}$
 \begin{impemize}
  \item {$x\,\in\, \bigcup_{[y]\cap (A\cup B)\neq \emptyset } [y] $}
  \item {$x\,\in\, \bigcup_{([y]\cap A)\cup ([y]\cap B)\neq \emptyset}  $}
  \item {$x\,\in\,\bigcup_{[y]\cap A\neq \emptyset} [y]  $ or $x\,\in\,\bigcup_{[y]\cap
B\neq \emptyset} [y]  $}
  \item {$x\in A^{u}\cup B^{u}$.}
 \end{impemize}
}
\item {By monotonicity, $(A\cap B)\,\subseteq\, A^{u}$ and $(A\cap B)\,\subseteq\,
B^{u}$, so $(A\cap B)^{u}\,\subseteq\, A^{u}\cap B^{u} $.
} 
\end{enumerate}
\qed
\end{proof}

\begin{theorem}
In a \textsf{PRAX} $S$, all of the following hold:
\begin{mitemize}
\item {$(\forall A\in \wp (S))\, A^{l+}\,\subseteq\, A^{lo} $}
\item {$(\forall A\in \wp (S))\, A^{uo}\,\subseteq\, A^{u+} $}
\end{mitemize}
\end{theorem}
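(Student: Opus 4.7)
The plan is to prove each inclusion directly from the definitions, using reflexivity of $R$ for the first part and proto-transitivity together with the containment $[x]_{o}\subseteq [x]$ (already established) for the second. Both proofs should be short pointwise arguments, with the only subtle point being degenerate cases in the application of proto-transitivity.

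For $A^{l+}\subseteq A^{lo}$, I would take $x\in A^{l+}$, so $[x]\subseteq A$ by definition. Since $[x]_{o}\subseteq [x]$, we immediately get $[x]_{o}\subseteq A$. Reflexivity of $R$ gives $Rxx$, hence $x\in [x]_{o}$. Therefore $x$ lies in the union $\bigcup_{[y]_{o}\subseteq A}[y]_{o}=A^{lo}$, finishing the first inclusion.

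For $A^{uo}\subseteq A^{u+}$, I would unfold the union: if $x\in A^{uo}$, then there exists $y$ with $x\in [y]_{o}$ and some $z\in [y]_{o}\cap A$. From $x\in [y]_{o}$ we read off $Rxy$ and $Ryx$; from $z\in [y]_{o}$ we read off $Rzy$ and $Ryz$. The goal is to show $[x]\cap A\neq \emptyset$, and the natural candidate is $z$, which requires $Rzx$. I would apply the proto-transitivity clause to the triple $(z,y,x)$: the four hypotheses $Rzy,\, Ryx,\, Ryz,\, Rxy$ are all in hand, and if $z,y,x$ are pairwise distinct in the required sense this yields $Rzx$, so $z\in [x]\cap A$ and thus $x\in A^{u+}$.

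The main obstacle, and essentially the only place calling for care, is handling the degenerate cases where the distinctness condition in proto-transitivity fails. These collapse cleanly: if $y=x$ then $z\in [x]_{o}\subseteq [x]$, so $z\in [x]\cap A$; if $y=z$ then $y\in A$ and $Ryx$ gives $y\in [x]\cap A$; and if $x=z$ then $x\in A$ and reflexivity gives $x\in [x]\cap A$. In every case $[x]\cap A\neq \emptyset$, so $x\in A^{u+}$ and the second inclusion follows. Since neither direction requires any property of $A$ beyond being a subset of $S$, the universal quantification is immediate.
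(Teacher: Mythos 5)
Your proof is correct. The first inclusion is argued exactly as in the paper (whose two-line proof only notes $[x]_{o}\subseteq [x]$; your explicit use of reflexivity to get $x\in [x]_{o}$, and hence membership in the union defining $A^{lo}$, makes the step the paper glosses over precise). The paper states the second inclusion without any proof, and your argument --- reading off $Rxy, Ryx, Rzy, Ryz$ from $x,z\in [y]_{o}$, applying the proto-transitivity clause to the triple $(z,y,x)$ to obtain $Rzx$, and separately disposing of the degenerate cases $y=x$, $y=z$, $x=z$ where the distinctness hypothesis fails --- fills that gap correctly.
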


\begin{theorem}
In a \textsf{PRAX} $S$, \[(\forall A\in \wp (S))\, A^{l+}\subseteq A^{lo}\] 
\end{theorem}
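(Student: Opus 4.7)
The plan is to unfold the definitions and exploit two earlier facts: the containment $[x]_o \subseteq [x]$ (first bullet of the proposition comparing symmetrized and ordinary neighbourhoods) and the reflexivity of $R$ (which comes with the PRAX hypothesis).

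First I would fix an arbitrary $x \in A^{l+}$, which by definition of the point-wise lower operator means $[x] \subseteq A$. Combined with $[x]_o \subseteq [x]$, this immediately gives $[x]_o \subseteq A$, so $[x]_o$ is one of the symmetrized neighbourhoods that participates in the union defining $A^{lo}$.

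Next I would invoke reflexivity of $R$ to note that $Rxx$ holds, hence $x \in [x]_o$. Together with $[x]_o \subseteq A$, this places $x$ inside the union $\bigcup_{[y]_o \subseteq A} [y]_o = A^{lo}$, completing the inclusion $A^{l+} \subseteq A^{lo}$.

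There is no real obstacle here: the argument is a two-line consequence of the definitions plus the already-established facts $[x]_o \subseteq [x]$ and reflexivity. The only point worth emphasising in the writeup is that reflexivity is essential (without it, $x$ need not lie in $[x]_o$, so the inclusion could fail even when $[x]_o \subseteq A$). I would also remark that the symmetric companion $A^{uo} \subseteq A^{u+}$, stated just above, can be argued dually by taking complements or by an analogous direct chase through the definitions.
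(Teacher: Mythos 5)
Your proof is correct and follows essentially the same route as the paper's: unfold $A^{l+}$ to get $[x]\subseteq A$, apply $[x]_o\subseteq[x]$ to conclude $[x]_o\subseteq A$, and use reflexivity to place $x$ in $[x]_o$ and hence in $A^{lo}$. The only difference is that you make the reflexivity step explicit, which the paper's two-line proof leaves implicit; that is a welcome clarification rather than a divergence.
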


\begin{proof}
If $x\in A^{l+}$, then $[x]\subseteq A$.
But as $[x]_{o}\subseteq [x]$, $A^{l+}\subseteq A^{lo}$.
\end{proof}

\begin{theorem}
In a \textsf{PRAX} $S$, \[(\forall A\in \wp (S))\, A^{lc}\,\subseteq\, A^{cu} .\]
\end{theorem}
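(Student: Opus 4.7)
The plan is to pick an arbitrary $x \in A^{lc}$ and exhibit a single neighbourhood witnessing that $x \in A^{cu}$; reflexivity of the relation in a PRAX should make $[x]$ itself the natural candidate.

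First, I would unfold the definitions. The statement $x \in A^{lc}$ means $x \notin A^l$, that is, $x$ is not covered by any neighbourhood $[y]$ with $[y] \subseteq A$. On the other hand, to show $x \in A^{cu}$ I need to exhibit some $z$ with $x \in [z]$ and $[z] \cap A^c \neq \emptyset$. The whole game is to produce such a $z$.

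The key move is to test $z = x$. Since $S$ is a PRAX, $R$ is reflexive, so $Rxx$ and hence $x \in [x]$. Now either $[x] \subseteq A$ or not. If $[x] \subseteq A$, then by definition of $A^l$ we would have $x \in \bigcup_{[y] \subseteq A}[y] = A^l$, contradicting $x \in A^{lc}$. Therefore $[x] \nsubseteq A$, which is to say $[x] \cap A^c \neq \emptyset$. Combined with $x \in [x]$, this places $x$ in $\bigcup_{[z] \cap A^c \neq \emptyset}[z] = A^{cu}$, which is what we wanted.

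There is no real obstacle here beyond noticing that reflexivity is doing all of the work: without $Rxx$ one could not guarantee that $x$ itself sits inside a neighbourhood $[z]$ that meets $A^c$. It is worth remarking that the dual inclusion $A^{uc} \subseteq A^{cl}$ need not hold in a PRAX (and one should not be tempted to prove it by the same argument), since $l$ and $u$ are not mutually dual in the absence of further conditions; the present direction is the one that survives, precisely because reflexivity yields $x \in [x]$.
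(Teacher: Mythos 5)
Your proof is correct and is essentially the paper's argument in streamlined form: the paper splits into the cases $z \in A^{c}$ and $z \in A \setminus A^{l}$ and finishes the second case by contradiction, but both proofs turn on the same single point, namely that reflexivity gives $x \in [x]$ and $[x] \nsubseteq A$ (else $x \in [x] \subseteq A^{l}$), so $[x]$ itself witnesses $x \in A^{cu}$. One caveat on your closing aside: the inclusion $A^{uc} \subseteq A^{cl}$ in fact \emph{does} hold in a PRAX by exactly the same reflexivity argument (if $x \notin A^{u}$ then, since $x \in [x]$, we must have $[x] \cap A = \emptyset$, hence $[x] \subseteq A^{c}$ and $x \in A^{cl}$); what genuinely fails in general are the reverse inclusions $A^{cu} \subseteq A^{lc}$ and $A^{cl} \subseteq A^{uc}$, because a point may lie simultaneously in a neighbourhood contained in $A$ and in another neighbourhood meeting $A^{c}$.
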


\begin{proof}
If $z\in A^{lc}$, then $ z\in [x]^{c}$ for all $[x]\subseteq A$ and either, $z\in
A\setminus A^{l}$ or $z\in A^{c}$.

If $z\in A^{c}$ then $z\in A^{cu}$.

If $z\in A\setminus A^{l}$ and $z\neq A^{cu \setminus A^{c}}$ then $[z]\cap A^{c}\,=\,
\emptyset$. But this contradicts $z\notin A^{cu} \setminus A^{c}$.

So $(\forall A\in \wp (S))\, A^{lc}\,\subseteq\, A^{cu} .$
\qed
\end{proof}

From the above, we have the following relation between approximations in general ($u+
\longrightarrow u$ should be read as \emph{the $u+$- approximation of a set is included
in the $u$-approximation of the same set}):

\includegraphics[width=11.4cm]{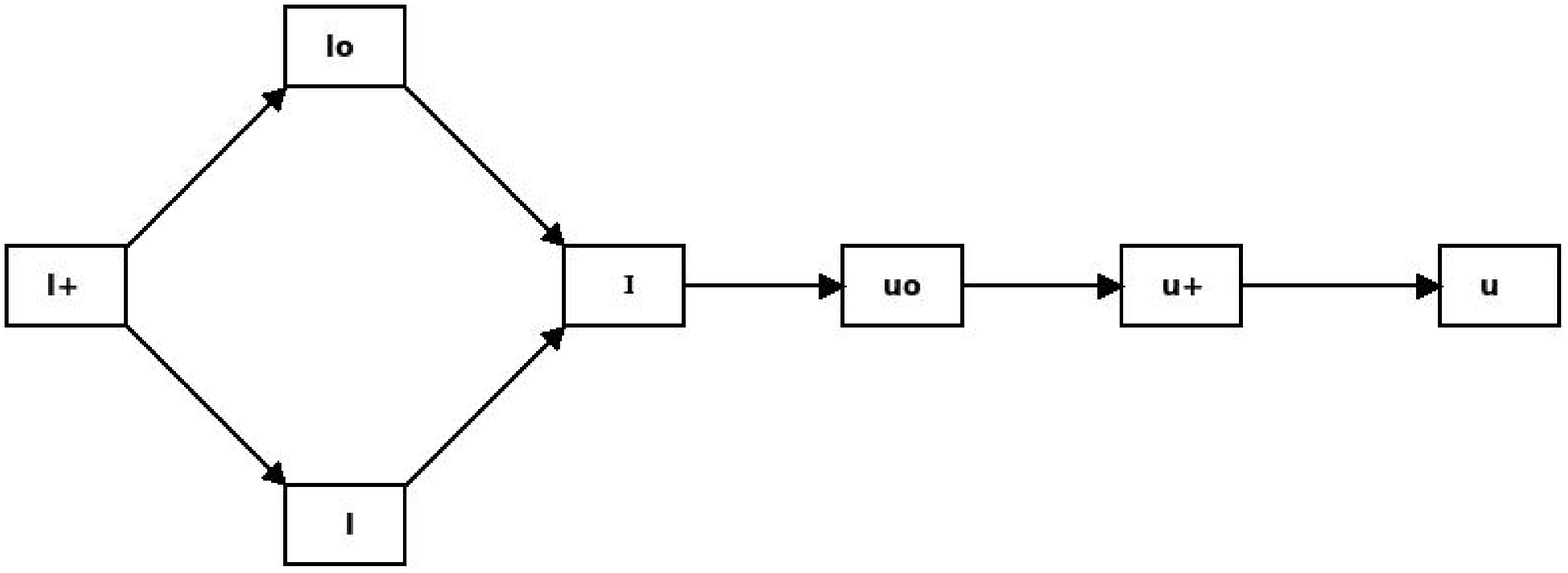}

If a relation $R$ is purely reflexive and not proto-transitive on a set $S$, then the
relation $\tau(R)\,=\, R\cap R^{-1}$ will not be an equivalence and for a $A\subset S$,
it is possible that $A^{uo l}\subseteq A $ or $A^{uol}\parallel A$ or $A\subseteq
A^{uol}$.

\begin{theorem}
In a \textsf{PRAX}, all of the following hold: 
\begin{description}
\item [Bi]{$(\forall A\in \wp(S))\, A^{ll}\,=\, A^{l}\, \& \, A^{u} \subseteq\, A^{uu} $.}
\item [l-Cup]{$(\forall A, B \in \wp(S))\, A^{l}\cup B^{l}\,\subseteq (A\cup B)^{l}$.}
\item [l-Cap]{$(\forall A, B \in \wp(S))\, (A\cap B)^{l}\,\subseteq\, A^{l}\cap B^{l}$.}
\item [u-Cup]{$(\forall A, B \in \wp(S))\,(A\cup B)^{u}\,=\, A^{u}\cup B^{u}$}
\item [u-Cap]{$(\forall A, B \in \wp(S))\,(A\cap B)^{u} \,\subseteq\, A^{u}\cap B^{u}$} 
\item [Dual]{$(\forall A\in \wp (S))\, A^{lc}\,\subseteq\, A^{cu}$.}
\end{description}
\end{theorem}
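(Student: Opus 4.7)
The statement is a bundling of facts, and inspection shows that the five clauses l-Cup, l-Cap, u-Cup, u-Cap and Dual have already been established as standalone theorems earlier in this section. So the real work is to verify the new clause Bi, namely $A^{ll} = A^{l}$ and $A^{u} \subseteq A^{uu}$; everything else is assembly by citation.

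For the identity $A^{ll} = A^{l}$, my plan is to prove two inclusions. The inclusion $A^{ll} \subseteq A^{l}$ will be essentially definitional: any $z \in A^{ll}$ sits in some $[y]$ with $[y] \subseteq A^{l}$, so $z \in A^{l}$. The reverse inclusion $A^{l} \subseteq A^{ll}$ is the substantive step and will hinge on a self-indexing observation: whenever $[x] \subseteq A$, the set $[x]$ is itself one of the neighbourhoods featured in the union defining $A^{l}$, so automatically $[x] \subseteq A^{l}$. Given $z \in A^{l}$, I will pick a witness $x$ with $z \in [x] \subseteq A$, promote this to $[x] \subseteq A^{l}$ by the observation, and conclude that $x$ witnesses $z \in A^{ll}$.

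For the expansion $A^{u} \subseteq A^{uu}$, only reflexivity of $R$ is needed. If $z \in A^{u}$, then $z \in [z]$ by reflexivity, so $z \in [z] \cap A^{u}$ and this intersection is non-empty; hence $[z]$ is one of the neighbourhoods in the union defining $A^{uu}$, giving $z \in [z] \subseteq A^{uu}$.

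I do not anticipate a genuine obstacle, because proto-transitivity is nowhere invoked for Bi; only reflexivity and the shape of $l$ and $u$ as unions over neighbourhood systems are used. The point worth flagging is that the companion equality $A^{uu} = A^{u}$ fails in general in a PRAX, so one should not try to upgrade the second half of Bi to an equality; the neighbourhoods in Example-1 (e.g. $A = \{l\}$, for which $A^{u}$ and $A^{uu}$ differ strictly) already provide a witness to this strictness, which is precisely the reason the statement stops at inclusion.
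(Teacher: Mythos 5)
Your proposal is correct, and it actually does slightly more than the paper's own proof. For the five clauses l-Cup, l-Cap, u-Cup, u-Cap and Dual, the paper's proof of this theorem just repeats verbatim the arguments already given for the corresponding standalone theorems earlier in the section, so your decision to discharge them by citation is the same content in a more economical packaging. The substantive difference is Bi: the paper's proof omits that clause entirely, whereas you supply a complete argument for it. Your argument is sound: $A^{ll}\subseteq A^{l}$ is definitional; the reverse inclusion follows from the self-indexing observation that $[x]\subseteq A$ forces $[x]\subseteq A^{l}$, so $l$ is idempotent without any appeal to proto-transitivity; and $A^{u}\subseteq A^{uu}$ needs only reflexivity, either directly as you argue or via $A\subseteq A^{u}$ together with monotonicity of $u$. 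Your closing caution is also well placed: taking $A=\{l\}$ in Example-1 gives $A^{u}=\{l,g,n,h\}$ while $A^{uu}$ additionally absorbs $a$, $b$ and $c$ through the neighbourhoods $[a]$, $[b]$, $[c]$, $[g]$ meeting $A^{u}$, so the second half of Bi is genuinely strict and cannot be upgraded to an equality.
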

\begin{proof}
\begin{description}
\item[l-Cup]{For any $A, B\in \wp{S}$, $x\in (A\cup B)^{l}$
\begin{impemize}
\item {$(\exists y\in (A\cup B) )\, x\in [y]\, \subseteq \, A\cup B $.}
\item {$(\exists y\in A )\, x\in [y]\, \subseteq \, A\cup B $ or $(\exists y\in  B
)\, x\in [y]\, \subseteq \, A\cup B $.}
\item {$(\exists y\in A) \, x\in [y]\, \subseteq \, A $ or $(\exists y\in A) \, x\in [y]\,
\subseteq \, B $ or $(\exists y\in B) \, x\in [y]\, \subseteq \, A $ or $(\exists y\in B)
\, x\in [y]\, \subseteq \, B $ - this is implied by $x\in A^{l} \cup B^{l}$.}
\end{impemize}}
\item[l-Cap]{For any $A, B\in \wp{S}$, $x\in (A\cap B)^{l}$
\begin{impemize}
\item {$ x\in \, A\cap B $}
\item {$(\exists y\in A\cap B )\, x\in [y]\, \subseteq \, A\cap B $ and $x\,\in\, A,\;
x\,\in\, B$}
\item {$(\exists y\in A) \, x\in [y]\, \subseteq \, A $ and $(\exists y\in B) \, x\in
[y]\,\subseteq \,B $ - Clearly this statement implies  $x\in A^{l} \& x\in  B^{l}$,
but the converse is not true in general.}
\end{impemize}} 
\item[u-Cup]{$x\,\in\, (A\cup B)^{u}$
 \begin{impemize}
  \item {$x\,\in\, \bigcup_{[y]\cap (A\cup B)\neq \emptyset } [y] $}
  \item {$x\,\in\, \bigcup_{([y]\cap A)\cup ([y]\cap B)\neq \emptyset}  $}
  \item {$x\,\in\,\bigcup_{[y]\cap A\neq \emptyset} [y]  $ or $x\,\in\,\bigcup_{[y]\cap
B\neq \emptyset} [y]  $}
  \item {$x\in A^{u}\cup B^{u}$.}
 \end{impemize}}
\item[u-Cap] {By monotonicity, $(A\cap B)\,\subseteq\, A^{u}$ and $(A\cap B)\,\subseteq\,
B^{u}$, so $(A\cap B)^{u}\,\subseteq\, A^{u}\cap B^{u} $.}
\item[Dual]{If $z\in A^{lc}$, then $ z\in [x]^{c}$ for all $[x]\subseteq A$ and either,
$z\in A\setminus A^{l}$ or $z\in A^{c}$. If $z\in A^{c}$ then $z\in A^{cu}$.
If $z\in A\setminus A^{l}$ and $z\neq A^{cu \setminus A^{c}}$ then $[z]\cap A^{c}\,=\,
\emptyset$. But this contradicts $z\notin A^{cu} \setminus A^{c}$.
So $(\forall A\in \wp (S))\, A^{lc}\,\subseteq\, A^{cu} .$}
\end{description}
\end{proof}

\section{Algebras of Rough Definite Elements}

In this section we prove key results on the fine structure of definite elements.

\begin{theorem}
On the set of proto definite elements $\delta_{lu}(S)$ of a \textsf{PRAX} $S$, we can
define the following:
\begin{enumerate}
\item {$x\wedge y\,\stackrel{\Delta}{=}\, x\cap y$}
\item {$x\vee y\,\stackrel{\Delta}{=}\, x\cup y$}
\item {$0\,\stackrel{\Delta}{=}\, \emptyset$, }
\item {$1\,\stackrel{\Delta}{=}\,S$ } 
\item {$x^{c}\,\stackrel{\Delta}{=}\,S \setminus x$}
\end{enumerate}
\end{theorem}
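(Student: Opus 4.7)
The theorem as stated packages a list of operations, so what really has to be verified is that $\delta_{lu}(S)$ is closed under each of them (the set-theoretic operations themselves are obviously well-defined on $\wp(S)$). My plan is to dispatch the closure claims in turn, relying on the identities \textbf{l-Cup}, \textbf{u-Cup}, \textbf{u-Cap} and \textbf{Dual} already proved, together with two structural observations about PRAX: (i) $A^{l}\subseteq A$ for every $A$ (each $[x]$ in the defining union of $A^l$ is itself a subset of $A$), and (ii) $A\subseteq A^{u}$ (by reflexivity, $z\in [z]\cap A$ whenever $z\in A$).

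The base cases $0=\emptyset$ and $1=S$ reduce to reading off the defining formulas of $\cdot^l$ and $\cdot^u$. For closure under $\vee=\cup$, the identity \textbf{u-Cup} directly gives $(A\cup B)^{u}=A^{u}\cup B^{u}=A\cup B$ when $A,B\in\delta_{u}$; for the lower side, \textbf{l-Cup} yields $A\cup B = A^{l}\cup B^{l}\subseteq (A\cup B)^{l}$, and observation (i) supplies the reverse inclusion. For $\wedge=\cap$, the crucial step is the following strengthening of being upper-definite: if $A=A^{u}$ and $z\in A$, then $z\in[z]\cap A$ forces $[z]\subseteq A^{u}=A$. Applied to $A,B\in\delta_{u}$ and $z\in A\cap B$, this gives $[z]\subseteq A\cap B$, so $z\in (A\cap B)^{l}$; together with observation (i) this gives $(A\cap B)^{l}=A\cap B$. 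For the upper side, \textbf{u-Cap} combined again with the same $[z]\subseteq A$ trick (applied to the witness $[x]$ with $[x]\cap(A\cap B)\neq\emptyset$) produces $(A\cap B)^{u}\subseteq A\cap B$, and observation (ii) finishes.

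The step that needs a little care is closure under $\cdot^{c}$. For $A\in\delta_{lu}$ and $z\in A^{c}$, if $[z]\cap A\neq\emptyset$ then $z\in[z]\subseteq A^{u}=A$, contradicting $z\in A^{c}$; hence $[z]\subseteq A^{c}$, which places $z\in (A^{c})^{l}$. Combined with observation (i) this gives $(A^{c})^{l}=A^{c}$. A symmetric argument -- take $w\in (A^{c})^{u}$ witnessed by some $[x]$ with $w\in[x]$ and $[x]\cap A^{c}\neq\emptyset$, and observe that $w\in A$ would force $[x]\subseteq A^{u}=A$, contradicting $[x]\cap A^{c}\neq\emptyset$ -- yields $(A^{c})^{u}\subseteq A^{c}$, and observation (ii) supplies equality.

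The only real obstacle is the single conceptual step above, namely the upgrade from ``$A=A^{u}$'' to ``$[z]\subseteq A$ for every $z\in A$''; once this is isolated, the three remaining closure claims all follow by the same template of combining a previously established inclusion with one of the two reflexivity observations. Everything else is bookkeeping, and no result beyond the theorems already displayed is invoked.
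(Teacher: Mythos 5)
Your proposal is correct and follows essentially the same route as the paper: it verifies closure of $\delta_{lu}(S)$ under each operation, using \textbf{u-Cup}/\textbf{l-Cup} for the join, \textbf{u-Cap} for the meet, and the granule-level observation that $[z]\cap A\neq\emptyset$ forces $[z]\subseteq A^{u}=A$ for complementation, exactly as the paper does. The one divergence is the lower-definiteness of $A\cap B$: the paper routes this through the identity $a^{ul}=a^{u}$ (which it asserts but never proves elsewhere), whereas you argue directly that $z\in A\cap B$ gives $[z]\subseteq A\cap B$, which is, if anything, more self-contained.
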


\begin{proof}
We need to show that the operations are well defined. Suppose  $x,\,y$ are proto-definite
elements, then
\begin{enumerate}
\item {\[(x\cap y)^{u}\subseteq x^{u}\cap y^{u} \,=\, x\cap y.\] 
\[(x\cap y)^{l}\,=\, (x^{u}\cap y^{u})^{l} \,=\, (x \cap y)^{ul} = (x\cap y)^{u}
\,=\, x\cap y .\] Since $a^{ul} = a^{u}$ for any $a$.}
\item {\[(x\cup y)^{u}\,=\, x\cup y\,=\, x^{l}\cup y^{l}\subseteq (x\cup y)^{l}.\]}
\item {$0\,\stackrel{\Delta}{=}\, \emptyset$ is obviously well defined.}
\item {Obvious.}
\item {Suppose $A\in \delta_{lu}(S)$, then $(\forall z\in A^{c}) \,[z]\cap A \,=\,
\emptyset$ is essential, else $[z]$ would be in $A^{u}$. This means $[z]\subseteq A^{c}$
and so $A^{c}\,=\, A^{cl}$. If there exists a $a\in A$ such that $[a]\cap A^{c}\neq
\emptyset$, then $[a]\subseteq A^{u}\,=\, A$. So $A^{c}\in \delta_{lu}(S)$. }
\end{enumerate}
\qed
\end{proof}

\begin{theorem}
The algebra $\delta_{proto}(S)\,=\, \left\langle \delta_{lu}(S), \vee,\wedge, c , 0, 1
\right\rangle $ is a Boolean lattice.
\end{theorem}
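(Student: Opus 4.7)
The plan is to present $\delta_{proto}(S)$ as a Boolean subalgebra of the ambient power-set Boolean algebra $\langle \wp(S),\, \cup,\, \cap,\, {}^{c},\, \emptyset,\, S\rangle$. Because the operations $\vee,\, \wedge,\, c$ on $\delta_{lu}(S)$ are, by their very definitions in the preceding theorem, the restrictions of $\cup,\, \cap,\, {}^{c}$ to this subcollection, every equational identity valid in $\wp(S)$ --- associativity, commutativity, the two absorption laws, both distributivities, the complement laws $A\cap A^{c}=\emptyset$ and $A\cup A^{c}=S$, and the bound laws $A\cap S = A$, $A\cup \emptyset = A$ --- will transfer verbatim to $\delta_{proto}(S)$ as soon as closure under these operations has been secured. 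Thus the theorem reduces to a closure argument plus an invocation of the universal-algebraic fact that a subalgebra of a Boolean algebra is a Boolean algebra.

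The required closure under $\wedge$, $\vee$, and the distinguished constants $0,\, 1$ is exactly what the preceding theorem established when it verified well-definedness of the operations. The delicate piece is closure under complementation, and this was handled in item~(5) of that theorem: for $A\in\delta_{lu}(S)$ one argued, using reflexivity and the key identity $a^{ul}=a^{u}$ derivable from proto-transitivity, that every $z\in A^{c}$ satisfies $[z]\subseteq A^{c}$ (so $A^{c}=A^{cl}$), and dually that no $a\in A$ has $[a]\cap A^{c}\neq\emptyset$ (so $A^{c}=A^{cu}$). Together with the obvious memberships $\emptyset,\, S\in\delta_{lu}(S)$, these give $\delta_{proto}(S)$ the status of a subalgebra of $\wp(S)$ in the signature $(\vee,\wedge,c,0,1)$.

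With subalgebra status in hand, the Boolean lattice axioms descend directly: for any $A\in\delta_{lu}(S)$, $A\wedge A^{c}=A\cap A^{c}=\emptyset = 0$ and $A\vee A^{c}=A\cup A^{c}=S = 1$, while distributivity $A\wedge(B\vee C)=(A\wedge B)\vee(A\wedge C)$ is the classical distributive law for sets. No additional PRAX-specific reasoning enters once closure is granted.

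The main obstacle, therefore, is not in this theorem but was absorbed into the previous one, namely the closure of $\delta_{lu}(S)$ under set-complement; that step genuinely used proto-transitivity through $a^{ul}=a^{u}$ and the vacuity of boundary intersections for proto-definite sets. Once this is accepted, the remainder of the proof is a one-line appeal to the fact that a collection of subsets of $S$ that is closed under union, intersection, and complement, and contains $\emptyset$ and $S$, is itself a Boolean lattice under the induced operations.
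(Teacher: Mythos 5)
Your proposal is correct and follows essentially the same route as the paper: the paper's own proof simply says the result ``follows from the previous theorem'' (the closure of $\delta_{lu}(S)$ under $\cup$, $\cap$, complement, with $\emptyset, S$ as bounds) and notes the induced lattice order, which is exactly your subalgebra-of-$\wp(S)$ argument spelled out in more detail. Your additional observation that the only PRAX-specific content lives in the closure-under-complement step of the preceding theorem is accurate and consistent with the paper's treatment.
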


\begin{proof}
Follows from the previous theorem. The lattice order can be defined via,
$x\leq y$ \ifof \, $\,x\cup y = y$ and $x\cap y = x$.
\qed
\end{proof}

\section{The Representation of Roughly Equal Elements}

The representation of roughly equal elements in terms of definite elements are well known
in case classical rough set theory. In case of more general spaces including tolerance
spaces \cite{AM240}, most authors have been concerned with describing the interaction of
rough approximations of different types and not of the interaction of roughly equal
objects. Higher order approaches, developed by the present author as in \cite{AM99} for
bitten approximation spaces, permit constructs over sets of roughly equal objects. In the
light of the contamination problem \cite{AM99,AM240}, it would be an improvement to
describe without higher order constructs. In this section a new method of representing
roughly equal elements based on expanding concepts of definite elements is developed. 

\begin{definition}
A subset $A$ of $\wp (S)$ will be said to a set of \emph{roughly equal} elements \ifof 
\[(\forall x, y\in A)\, x^{l}\,=\, y^{l}\,\& \, x^{u}\,=\,y^{u}.\] 
It will be said to be \emph{full} if no other subset properly including $A$ has the
property. 
\end{definition}

Relative the situation for a general \textsf{RYS}, we have
\begin{theorem}[Meta-Theorem]
In a \textsf{PRAX} $S$, full set of roughly equal elements is necessarily a union of
intervals in $\wp(S)$.
\end{theorem}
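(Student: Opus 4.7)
The plan is to show that a full set $A$ of roughly equal elements is convex in the lattice $\wp(S)$, and then observe that any convex subset is expressible as a union of intervals. Let $L=x^l$ and $U=x^u$ be the common lower and upper approximations for all $x\in A$.

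First I would verify that $A\subseteq [L,U]$, meaning each $x\in A$ satisfies $L\subseteq x\subseteq U$. This uses reflexivity of the underlying relation $R$ in a \textsf{PRAX}: for any $y\in x$ we have $y\in[y]$, so $[y]\cap x\neq\emptyset$ and hence $y\in x^u$, giving $x\subseteq U$; dually, every $[y]$ with $[y]\subseteq x$ lies inside $x$, giving $L\subseteq x$. So all elements of $A$ are sandwiched between $L$ and $U$.

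Next I would establish convexity of $A$. Suppose $x,y\in A$ with $x\subseteq y$ and take any $z$ with $x\subseteq z\subseteq y$. Monotonicity of $l$ and $u$ follows directly from the results \textbf{l-Cap}, \textbf{l-Cup}, \textbf{u-Cap}, \textbf{u-Cup} established earlier: if $P\subseteq Q$ then $P\cap Q=P$ forces $P^l\subseteq Q^l$, and $P\cup Q=Q$ forces $P^u\subseteq Q^u$. Applying this to $x\subseteq z\subseteq y$ yields $L=x^l\subseteq z^l\subseteq y^l=L$ and $U=x^u\subseteq z^u\subseteq y^u=U$, so $z^l=L$ and $z^u=U$. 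Then $A\cup\{z\}$ is still a set of roughly equal elements, and by \emph{fullness} (maximality) of $A$, we must have $z\in A$. Hence $A$ is convex in $\wp(S)$.

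Finally, since $A$ is convex, every pair $a,b\in A$ with $a\subseteq b$ satisfies $[a,b]\subseteq A$, and trivially $x\in[x,x]$ for each $x\in A$, so
\[
A \;=\; \bigcup_{\substack{a,b\in A\\ a\subseteq b}}\,[a,b],
\]
exhibiting $A$ as a union of intervals of $\wp(S)$. One can further refine this to maximal intervals by taking, for each $x\in A$, the largest interval of $\wp(S)$ containing $x$ and contained in $A$. The main subtlety — and the reason the statement is non-trivial compared with the classical case — is that $A$ need not equal the full interval $[L,U]$ in a \textsf{PRAX}, since intermediate subsets between $L$ and $U$ can acquire strictly different approximations; convexity (not interval-ness) is the best one can say in general, and the proof hinges on invoking fullness together with monotonicity to close $A$ under the sandwich condition.
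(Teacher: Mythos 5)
Your proof is correct, and it is worth noting that the paper itself states this result as a Meta-Theorem without supplying any argument, so there is no "official" proof to compare against. Your three steps all check out: monotonicity of $l$ and $u$ does follow from \textbf{l-Cap}/\textbf{l-Cup} and \textbf{u-Cup} exactly as you say (or directly from the definitions $A^{l}=\bigcup_{[x]\subseteq A}[x]$ and $A^{u}=\bigcup_{[x]\cap A\neq\emptyset}[x]$), the sandwich argument then forces $z^{l}=L$ and $z^{u}=U$ for any $z$ between two members of $A$, and fullness (which makes $A$ precisely the class $\{z : z^{l}=L,\ z^{u}=U\}$) closes $A$ under this. In fact you prove strictly more than the stated theorem: you establish that a full set of roughly equal elements is order-convex in $\wp(S)$. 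This is the genuinely useful content, because "union of intervals" on its own is nearly vacuous --- any subset of $\wp(S)$ whatsoever is the union of its degenerate intervals $[x,x]$ --- whereas convexity is what the subsequent sections actually exploit when they describe full classes via bruinvals bounded by critical points. Your closing remark is also the right one: convexity, not interval-ness, is the sharpest general statement, since in a \textsf{PRAX} the full class with approximations $(L,U)$ need not be all of $[L,U]$, as intermediate sets can change their upper approximation. The only presentational quibble is that your first step (showing $A\subseteq[L,U]$ via reflexivity) is not actually used in the rest of the argument; it is harmless context but could be dropped.
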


\begin{definition}
A non-empty set of non singleton subsets $\alpha\,=\, \{x\,:\,x\subseteq \wp(S)\}$ will be
said to be a \emph{upper broom} \ifof all of the following hold:
\begin{mitemize}
\item {$(\forall x, y \in \alpha)\, x^{u}\,=\, y^{u} $}
\item {$(\forall x, y \in \alpha)\, x\,\parallel \, y$}
\item {If $\alpha \subset \beta$, then $\beta$ fails to satisfy at least
one of the two
conditions.}
\end{mitemize}
The set of upper brooms of $S$ will be denoted by $\pitchfork (S)$.
\end{definition}

\begin{definition}
A non-empty set of non singleton subsets $\alpha\,=\, \{x\,:\,x\subseteq \wp(S)\}$ will be
said to be a
\emph{lower broom} \ifof all of the following hold:
\begin{mitemize}
\item {$(\forall x, y \in \alpha)\, x^{l}\,=\, y^{l}\neq x $}
\item {$(\forall x, y \in \alpha)\, x\,\parallel \, y$}
\item {If $\beta \subset \alpha$ and $\beta$ has at least two elements,
then $\beta$ fails to satisfy at least one of the two
conditions.}
\end{mitemize}
The set of lower brooms of $S$ will be denoted by $\psi (S)$.
\end{definition}

\begin{proposition}
If $x\in \delta_{lu}(S)$ then $\{x\}\notin \pitchfork (S)$ and $\{x\}\notin \psi (S)$.
\end{proposition}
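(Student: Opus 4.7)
The plan is to verify directly that $\{x\}$ violates one of the defining clauses of each of the two broom definitions by specializing the universal quantifiers to the single element of $\{x\}$. A preliminary case split first handles the possibility that $x$ is itself a singleton subset of $S$: in that situation $\{x\}$ already fails the ``non singleton subsets'' requirement in both definitions, and we are done. Henceforth assume $x$ is non-singleton, so the non-singleton clause is satisfied.

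For $\{x\} \notin \psi(S)$, I would apply the first clause of the lower broom definition, $(\forall y, z \in \alpha)\, y^{l} = z^{l} \neq y$, with $\alpha = \{x\}$ and $y = z = x$. This collapses to the requirement $x^{l} \neq x$. But $x \in \delta_{lu}(S)$ entails in particular that $x$ is lower proto-definite, i.e.\ $x^{l} = x$; contradiction.

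For $\{x\} \notin \pitchfork(S)$, I would apply the parallelism clause $(\forall y, z \in \alpha)\, y \parallel z$ with $\alpha = \{x\}$ and $y = z = x$. This collapses to $x \parallel x$, which is impossible, since $x \subseteq x$ makes $x$ comparable with itself under the inclusion order on $\wp(S)$. Thus $\{x\}$ fails the parallelism clause already, independently of proto-definiteness.

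The main (indeed essentially the only) conceptual issue is to read the universal quantifiers in the broom definitions literally, so that each clause must hold for every pair of elements of $\alpha$, including the pair $(x,x)$ when $\alpha = \{x\}$. Under this literal reading the proposition is immediate: lower broom membership fails by proto-definiteness of $x$ on the lower side, while upper broom membership fails by reflexivity of $\subseteq$. No deeper structural property of the ambient \textsf{PRAX}, and no appeal to the maximality clauses, is needed.
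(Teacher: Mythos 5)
The paper states this proposition without any proof, so there is nothing to compare against directly; judged on its own terms, your argument is half right. The $\psi(S)$ half is sound and is surely the intended content: instantiating the clause $(\forall a,b\in\alpha)\, a^{l}=b^{l}\neq a$ at the unique element of $\{x\}$ yields $x^{l}\neq x$, which contradicts $x\in\delta_{lu}(S)$, and this is the only place where the hypothesis does any work. Your preliminary case split on $x$ being a singleton subset of $S$ is also harmless.

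The $\pitchfork(S)$ half has a genuine problem: it rests entirely on the diagonal instance $x\parallel x$ of the parallelism clause. If that instance is admitted, it is false for every element of every candidate $\alpha$, so no non-empty $\alpha$ is ever an upper (or lower) broom, $\pitchfork(S)$ and $\psi(S)$ are identically empty, the proposition is vacuous, and the later theorem characterizing full open bruinvals $(x,\alpha)$ via upper brooms becomes unusable. So the parallelism quantifier must be read over distinct pairs, and under that reading your upper-broom argument evaporates: for $\alpha=\{x\}$ the first two clauses are vacuously satisfied, and the maximality clause is in fact also satisfied, because any $y$ with $y^{u}=x^{u}=x$ satisfies $y\subseteq y^{u}=x$ (reflexivity gives $A\subseteq A^{u}$), hence is comparable with $x$, so no proper superset of $\{x\}$ can meet the parallelism condition. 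The only remaining way to exclude $\{x\}$ is to read ``non-empty set of non singleton subsets'' as requiring $\alpha$ itself to have at least two elements --- an argument that never uses $x\in\delta_{lu}(S)$. As written, your upper-broom half proves too much under one reading of the definition and nothing under the other; you need to commit to a reading and, if it is the distinct-pairs one, supply a different reason (or acknowledge that the exclusion is purely definitional).
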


In the next definition, the concept of union of intervals in a partially ordered set is
modified in a way for use with specific types of objects.  

\begin{definition}
By a \emph{bruinval}, we will mean a subset of $\wp (S)$ of one of the following forms:
\begin{mitemize}
\item {Bruinval-0: Intervals of the form $(x, y),\,[x, y),\, [x, x],\, (x, y]$ for
$x,y\in \wp(S)$.}
\item {Open Bruinvals: Sets of the form $[x, \alpha)\,=\, \{z\, : \, x\, \leq\, z\,
< \, b\,\&\, b\in \alpha\}$, $(x, \alpha]\,=\, \{z\, : \, x\, < \, z\,
\leq\, b\,\&\, b\in \alpha\}$ and $(x, \alpha)\,=\, \{z\, : \, x\, < \, z\,
< \, b\,, b\in \alpha\}$ for $\alpha \in \wp(\wp (S))$.}
\item {Closed Bruinvals: Sets of the form $[x, \alpha]\,=\, \{z\, : \, x\, \leq\, z\,
\leq\, b\,\&\, b\in \alpha\}$ for $\alpha \in \wp(\wp (S))$.}
\item {Closed Set Bruinvals: Sets of the form $[\alpha, \beta]\,=\,\{z\, : \, x\, \leq\,
z\,\leq\, y\,\&\, x\in \alpha \& y\in \beta\}$ for $\alpha , \beta \in \wp(\wp (S))$ }
\item {Open Set Bruinvals: Sets of the form $(\alpha, \beta)\,=\,\{z\, : \, x\, < \,
z\,< \, y\,, x\in \alpha \& y\in \beta\}$ for $\alpha , \beta \in \wp(\wp (S))$.}
\item {Semi-Closed Set Bruinvals: Sets of the form $[[\alpha, \beta]]$ defined as
follows:  $\alpha = \alpha_{1}\cup\alpha_{2} $, $\beta = \beta_{1}\cup\beta_{2}$ and
$[[\alpha,\beta]]\,=\,(\alpha_{1},\beta_{1})\cup
[\alpha_{2},\beta_{2}]\cup(\alpha_{1},\beta_{2}]\cup [\alpha_{2},\beta_{1})  $ for $\alpha
, \beta \in \wp(\wp (S))$.} 
\end{mitemize}
\end{definition}

In the example of the second section, the representation of the rough object
$(P_{i}^{l}, P_{i}^{u})$ requires set bruinvals.

\begin{proposition}
If $S$ is a \textsf{PRAX}, then a set of the form $[x, y]$ with $x, y\in \delta_{lu} (S)$
will be a set of roughly equal subsets of $S$ \ifof $x\,=\, y$. 
\end{proposition}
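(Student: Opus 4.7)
The plan is a short two-direction argument, using that every element of $\delta_{lu}(S)$ is its own lower and upper approximation.

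For the easy direction, I would observe that if $x = y$, then the interval $[x,y]$ reduces to the singleton $\{x\}$, which trivially satisfies the condition of the definition of a set of roughly equal subsets (the universal quantification over pairs is vacuous or reflexive).

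For the nontrivial direction, suppose $[x,y]$ is a set of roughly equal subsets; I would first note that $x \subseteq y$ is needed for the interval to be non-empty (otherwise the claim would have to be read vacuously, which is not the intended content). Since $x, y \in [x,y]$, the defining condition for a roughly equal family forces $x^{l} = y^{l}$ and $x^{u} = y^{u}$. Now I invoke the hypothesis $x, y \in \delta_{lu}(S)$: by definition of proto-definiteness, $x^{l} = x^{u} = x$ and $y^{l} = y^{u} = y$. Substituting gives $x = x^{l} = y^{l} = y$, as required.

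I do not see any real obstacle here; the statement is essentially a sanity check that plain intervals between proto-definite endpoints cannot carry nontrivial rough equivalence classes, which is exactly why the more elaborate notion of \emph{bruinval} (and the upper/lower broom machinery) was introduced in the preceding definitions. The only subtle point to flag is the convention that $[x,y]$ is taken with $x \subseteq y$; otherwise the statement should be interpreted as covering only non-empty intervals, which is standard in this setting.
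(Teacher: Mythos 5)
Your argument is correct and matches the evident intended one: the paper states this proposition without an explicit proof, and the natural route is exactly what you do --- both endpoints lie in the (non-empty) interval, so rough equality forces $x^{l}=y^{l}$ and $x^{u}=y^{u}$, and membership in $\delta_{lu}(S)$ collapses this to $x=y$. Your caveat about requiring $x\subseteq y$ (non-emptiness of $[x,y]$) is also consistent with the paper's own phrasing in the adjacent proposition, which explicitly restricts to non-empty sets of roughly equal elements.
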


\begin{proposition}
A bruinval-0 of the form $(x, y)$  is a full set of roughly equal
elements if 
\begin{mitemize}
\item {$x, y \in \delta_{lu} (S)$,}
\item {$x$ is covered by $y$ in the order on $\delta_{lu} (S)$.}
\end{mitemize}
\end{proposition}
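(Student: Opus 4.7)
My approach is to first show that every $z$ in the open interval $(x,y)$ satisfies $z^{l} = x$ and $z^{u} = y$, and then to deduce fullness from this. For the first part, take any $z$ with $x \subsetneq z \subsetneq y$. Monotonicity of the two approximation operators combined with $x,y \in \delta_{lu}(S)$ yields the sandwich $x = x^{l} \subseteq z^{l} \subseteq y^{l} = y$ and $x = x^{u} \subseteq z^{u} \subseteq y^{u} = y$. Because $z^{l} \subseteq z$ and $z \subseteq z^{u}$ in any \textsf{PRAX} (the first from the definition of $A^{l}$, the second from reflexivity of $R$), combined with the strict inclusions $x \subsetneq z \subsetneq y$, we obtain the additional strict containments $z^{l} \subsetneq y$ and $x \subsetneq z^{u}$.

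The crux is then to invoke the covering hypothesis, for which I first need the subclaim that $z^{l}, z^{u} \in \delta_{lu}(S)$. For $z^{l}$, the property \textbf{Bi} already gives $(z^{l})^{l} = z^{l}$, so only upper proto-definiteness $(z^{l})^{u} = z^{l}$ requires work; here I would exploit $(z^{l})^{u} \subseteq z^{u} \subseteq y$ together with the neighborhood-based definition of $u$ and proto-transitivity of $R$ (in the form that $R \cap R^{-1} = \tau(R)$ is weakly transitive) to deduce equality. An analogous argument treats $z^{u}$. Once $z^{l}, z^{u}$ are shown to be proto-definite, the fact that $x$ is covered by $y$ in the Boolean lattice $\delta_{lu}(S)$ together with the strict containments already established forces $z^{l} = x$ and $z^{u} = y$. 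This proves that $(x,y)$ is a set of roughly equal elements.

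For fullness, suppose $w \in \wp(S)$ satisfies $w^{l} = x$ and $w^{u} = y$, i.e.\ it shares approximations with the common ones of $(x,y)$. Then $x = w^{l} \subseteq w \subseteq w^{u} = y$, so $w \in [x,y]$. The endpoints are excluded: $w = x$ would force $w^{u} = x^{u} = x \neq y$, and $w = y$ would force $w^{l} = y^{l} = y \neq x$. Hence $w \in (x,y)$, so nothing outside the open interval can be adjoined while preserving the roughly-equal property.

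The principal obstacle is the subclaim that $z^{l}$ and $z^{u}$ are both proto-definite. Upper proto-definiteness of $z^{l}$ does not follow from the monotonicity and idempotence identities already recorded in the excerpt, and genuinely requires the proto-transitive structure of $R$ together with the covering hypothesis on $(x,y)$. If a direct verification proves delicate, a workable fallback is an indirect argument: use $x \subsetneq z^{u}$, $z^{l} \subsetneq y$ and repeated application of $l$ and $u$ to construct a proto-definite element strictly between $x$ and $y$ in the contrary case, deriving a violation of covering and thereby forcing $z^{l} = x$ and $z^{u} = y$ directly.
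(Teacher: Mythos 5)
Your overall plan---show that every $z$ strictly between $x$ and $y$ has $z^{l}=x$ and $z^{u}=y$, then deduce fullness from the sandwich $w^{l}\subseteq w\subseteq w^{u}$---is the natural one, and the fullness step is fine as far as it goes. (The paper itself states this proposition without proof, so there is no official argument to compare against.) The problem is that the entire weight of the argument rests on the subclaim that $z^{l}$ and $z^{u}$ lie in $\delta_{lu}(S)$, and you never actually prove it: you say you ``would exploit'' proto-transitivity and the neighbourhood definition of $u$, and offer an unexecuted fallback. This subclaim is false in a \textsf{PRAX}. Property \textbf{Bi} gives only $A^{ll}=A^{l}$ and $A^{u}\subseteq A^{uu}$, so $u$ is not even idempotent, and an $l$-image is lower definite but in general not upper definite; proto-transitivity does not rescue this.

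Worse, the conclusion you are trying to reach fails, so no repair of the subclaim is possible without strengthening the hypotheses. In the paper's own Example-1 (with $P$ the reflexive closure), take $x=\emptyset$ and $y=C=\{a,b,c,g,h,l,n\}$. Both are proto-definite: $C^{l}=C^{u}=C$ by direct computation from the neighbourhood table. Moreover $y$ covers $x$ in $\delta_{lu}(S)$, since any nonempty $A\subseteq C$ with $A^{u}=A$ must absorb every neighbourhood it meets and this forces $A=C$. Yet $z=\{a,h\}=[a]$ lies in $(x,y)$ and satisfies $z^{l}=\{a,h\}\neq\emptyset$ (here $z^{l}$ is lower definite but $\{a,h\}^{u}=C$, so it is not in $\delta_{lu}(S)$ and the covering argument cannot be applied to it), while $\{b\}\in(x,y)$ has $\{b\}^{l}=\emptyset$. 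So $(x,y)$ is not even a set of roughly equal elements, let alone a full one. The proposition needs additional hypotheses of the kind the paper imposes in the immediately following results (e.g.\ conditions forcing $x\setminus x^{l}$ and $y^{u}\setminus y$ to be singletons, or constraints on all $z$ in the interval); as written, your proof cannot be completed.
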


\begin{proposition}
If $x, y\in \delta_{lu} (S)$ then sets of the form $[x, y),\, (x,
y]$ cannot be a non-empty set of roughly equal elements, while those of the form $[x, y]$
can be \ifof $x = y$. 
\end{proposition}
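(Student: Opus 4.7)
The plan is to treat the three interval forms separately, relying on two facts already established in the paper: (i) for every $x \in \delta_{lu}(S)$ we have $x^l = x^u = x$; and (ii) $l$ and $u$ are monotone with respect to inclusion, while the reflexivity of $R$ yields $A^l \subseteq A \subseteq A^u$ for every $A \in \wp(S)$ (the right inclusion uses $a \in [a]$, and the left is immediate from the definition of $A^l$ as a union of $[x] \subseteq A$).

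For the closed case $[x,y]$, both endpoints lie in the interval and are $lu$-definite, so the rough-equality condition applied to the pair $(x,y)$ forces $x = x^l = y^l = y$. Conversely, if $x = y$ then $[x,y] = \{x\}$ trivially satisfies the defining universal quantifier. This yields the stated \ifof clause.

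For the half-open cases, $x = y$ makes both intervals empty, so one may assume $x \subsetneq y$. For $[x,y)$ I would pick an arbitrary $z$ in the interval and compare with the forced member $x$: rough equality demands $z^u = x^u = x$, and combining $z \supseteq x$ with $z \subseteq z^u$ gives $x \subseteq z \subseteq x$, so $z = x$. A dual calculation for $(x,y]$ uses $z \subseteq y$, $z^l = y^l = y$, and $z^l \subseteq z$ to force $z = y$. Consequently $[x,y)$ collapses to $\{x\}$ and $(x,y]$ collapses to $\{y\}$.

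The main obstacle is conceptual rather than computational: the residual singletons $\{x\}$ and $\{y\}$ vacuously satisfy the defining universal quantifier of a roughly-equal set, so a pedantic reading of the statement would be false whenever $y$ happens to cover $x$ in $\wp(S)$. I would resolve this by appealing to the convention already in force in the preceding broom propositions, where singletons drawn from $\delta_{lu}(S)$ are explicitly excluded; the same reading is natural here since a singleton carries no genuine rough-object content. Under that convention the collapses above show that the half-open intervals $[x,y)$ and $(x,y]$ contain no non-trivial set of roughly equal elements, which is exactly the claim of the proposition.
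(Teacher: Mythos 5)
Your core computations are correct, and the paper itself supplies no proof of this proposition, so there is nothing to compare against except the statement: the argument you give (use reflexivity to get $z^l \subseteq z \subseteq z^u$, then pin each $z$ in the interval to the definite endpoint that the interval forces into membership, so that $z^u = x^u = x$ collapses $[x,y)$ to $\{x\}$ and dually $z^l = y^l = y$ collapses $(x,y]$ to $\{y\}$) is the natural one and is sound. The closed-interval clause is handled correctly as well.

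The one substantive point is the edge case you flag, and you are right that it is real rather than cosmetic: if $y$ covers $x$ in $\wp(S)$ (e.g.\ $R = \Delta_S$, $x = \emptyset$, $y = \{a\}$, both of which lie in $\delta_{lu}(S)$), then $[x,y) = \{x\}$ is non-empty and satisfies the paper's definition of a set of roughly equal elements vacuously, so the proposition is false under a literal reading. However, your proposed repair --- importing a ``no singletons'' convention from the broom propositions --- is shakier than you suggest: the broom definitions exclude singleton \emph{subsets of $S$} as elements of $\alpha$, not singleton collections, and more importantly the closed-interval clause of this very proposition explicitly \emph{admits} the singleton $[x,x] = \{x\}$ as a set of roughly equal elements when $x = y$. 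So a blanket exclusion of singletons would falsify the ``can be \ifof $x = y$'' direction that you just proved. The honest conclusion is that your proof establishes the correct content (each half-open interval is a roughly-equal set only when it degenerates to a single definite element, which happens exactly when $y$ covers $x$), and that the statement as printed needs either the covering case excluded or the singleton convention applied asymmetrically; you should state that as an erratum to the proposition rather than paper over it with a convention that contradicts the closed case.
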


\begin{proposition}
A bruinval-0 of the form $[x, y)$  is a full set of roughly equal
elements if 
\begin{mitemize}
\item {$x^{l}, y^{u}\in \delta_{lu} (S)$, $x^{l} = y^{l}$ and $x^{u} = y^{u}$,}
\item {$x^{l}$ is covered by $y^{u}$ in $\delta_{lu} (S)$ and}
\item {$x\setminus (x^{l})$ and $y^{u}\setminus y$ are singletons}
\end{mitemize}
\end{proposition}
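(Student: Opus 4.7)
The plan is to split the claim into the ``roughly equal'' part and the ``full'' (i.e., maximal) part. Let $A := x^{l}=y^{l}$ and $B := x^{u}=y^{u}$, both in $\delta_{lu}(S)$ by hypothesis, and write $\{a\} = x\setminus x^{l}$, $\{b\}=y^{u}\setminus y$, so that $x = A\cup\{a\}$ and $y = B\setminus\{b\}$, with the chain $A\subsetneq x\subseteq y\subsetneq B$ and $A\lessdot B$ in $\delta_{lu}(S)$.

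For the first part, if $z\in[x,y)$ then $x\subseteq z\subseteq y$, and monotonicity of $l$ and $u$ in a \textsf{PRAX} (each operator is a union indexed by an inclusion-respecting condition on $[\cdot]$-neighborhoods) gives $A = x^{l}\subseteq z^{l}\subseteq y^{l}=A$ and $B = x^{u}\subseteq z^{u}\subseteq y^{u}=B$; hence $z^{l}=A$ and $z^{u}=B$, so every element of $[x,y)$ shares the pair $(A,B)$.

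For fullness, take $w\in\wp(S)$ with $w^{l}=A$ and $w^{u}=B$, and aim to show $w\in[x,y)$. Reflexivity yields $A = w^{l}\subseteq w\subseteq w^{u}=B$, so $w\in[A,B]$ in $\wp(S)$. I would then establish two inclusions: (i) $a\in w$, so that $x\subseteq w$, by arguing that absence of $a$ from $w$ --- combined with the fact that $a\notin x^{l}$ (hence no $[z]\subseteq x$ contains $a$) --- would force $w^{l}$ to produce a proto-definite set strictly between $A$ and $B$ or to miss part of $A$, contradicting the coverage $A\lessdot B$ in $\delta_{lu}(S)$; and (ii) $b\notin w$ together with $w\neq y$, so that $w\subsetneq y$, by the dual argument that including $b$ would inflate $w^{u}$ beyond $B$ via a successor neighborhood $[z]\ni b$ not already subsumed by $y^{u}=B$, or else produce an intermediate proto-definite set again violating coverage. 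The singleton hypotheses on $x\setminus A$ and $B\setminus y$ are precisely what pin $a$ and $b$ down as the unique witnesses on each side.

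The main obstacle is the combinatorial bookkeeping of successor neighborhoods touching $a$ and $b$, invoking proto-transitivity to close off alternative subsets of $[A,B]$ sharing the $(A,B)$-approximations. A related delicacy is that $y$ itself satisfies $y^{l}=A$ and $y^{u}=B$, so the exclusion of the right endpoint from $[x,y)$ must be reconciled with maximality --- either by exhibiting an extra structural reason the hypotheses rule out $w=y$, or by reading the bruinval convention so that the endpoint sits outside the class. Pinning down this endpoint and the neighborhood combinatorics around $a$ and $b$ is where I expect the proof to be most subtle.
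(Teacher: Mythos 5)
Your first half is fine: monotonicity of $l$ and $u$ (which follows from the l-Cup and u-Cup properties) squeezes $z^{l}$ between $x^{l}$ and $y^{l}$ and $z^{u}$ between $x^{u}$ and $y^{u}$, so every $z\in[x,y)$ carries the pair $(A,B)$. Note that the paper itself supplies no proof of this proposition; its nearest proved relative is the subsequent theorem on $[x,y]$ with the extra hypothesis $x=x^{l}$, argued by observing that any $h$ with $h^{l}=x$ and $h^{u}=y^{u}$ is forced into $[x^{l},y^{u}]$, and that the singleton condition leaves $y^{u}$ as the only element of that larger interval outside $[x,y]$.

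The fullness half of your proposal, however, is a plan rather than a proof, and the two steps it rests on do not follow from the hypotheses. For (i), the covering condition $A\lessdot B$ lives in $\delta_{lu}(S)$: it forbids a \emph{proto-definite} set strictly between $A$ and $B$, but says nothing about arbitrary subsets $w$ with $A\subsetneq w\subsetneq B$ whose approximations happen to be $(A,B)$. If $B\setminus A$ has three or more elements, nothing stated rules out a $w$ omitting $a$ (say $w=A\cup\{c\}$ for some $c\in B\setminus(A\cup\{a,b\})$) with $w^{l}=A$ and $w^{u}=B$; such a $w$ lies outside $[x,y)$ and destroys fullness. Your appeal to ``$w^{l}$ producing a proto-definite set strictly between $A$ and $B$'' does not go through, because $w^{l}$ need not be upper-definite and in any case equals $A$ by assumption. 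More decisively, the endpoint worry you raise at the end is not a delicacy to be reconciled but a genuine obstruction: the hypotheses give $y^{l}=x^{l}$ and $y^{u}=x^{u}$, so $y$ itself is roughly equal to every element of $[x,y)$, and whenever $x\subsetneq y$ the set $[x,y]$ is a strictly larger set of roughly equal elements, so $[x,y)$ cannot be full as literally stated. No proof strategy can close this; the statement needs amendment (for instance to the closed interval $[x,y]$, as in the theorem that follows it in the paper) before the fullness claim can be established.
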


\begin{remark}
In the above proposition the condition $x^{l}, y^{u}\in \delta_{lu} (S)$, is not
necessary.  
\end{remark}

\begin{theorem}
If a bruinval-0 of the form $[x, y]$ satisfies 
\begin{mitemize}
\item {$x^{l} = y^{l} = x$ and $x^{u} = y^{u}$,}
\item {$y^{u}\setminus y$ is a singleton.}
\end{mitemize}
then  $[x, y]$ is a full set of roughly equal objects.
\end{theorem}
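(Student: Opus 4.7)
My plan is in two stages: first verify that every member of $[x,y]$ is roughly equal to every other, then establish maximality.

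For stage one, I would use monotonicity of $l$ and $u$. For any $z$ with $x \subseteq z \subseteq y$, we have $x = x^{l} \subseteq z^{l} \subseteq y^{l} = x$ and $x^{u} \subseteq z^{u} \subseteq y^{u} = x^{u}$, so $z^{l} = x$ and $z^{u} = x^{u}$. Thus every pair in $[x, y]$ is roughly equal, and the approximation data of each element of $[x,y]$ coincides with $(x, x^{u})$.

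For stage two (maximality), suppose $w \in \wp(S)$ satisfies $w^{l} = x$ and $w^{u} = x^{u}$; I must show $w \in [x,y]$. Reflexivity of $R$ gives $A^{l} \subseteq A \subseteq A^{u}$ for any $A$, so $x = w^{l} \subseteq w \subseteq w^{u} = x^{u} = y^{u}$. Writing $y^{u} = y \cup \{a\}$ from the singleton hypothesis, either $a \notin w$ (giving $w \subseteq y$ and we are done) or $a \in w$; I would derive a contradiction in the latter case. The key tautology is that $[z] \subseteq A$ implies $[z] \subseteq A^{l}$; applied to $w$, every neighborhood inside $w$ lies inside $x$. By reflexivity $a \in [a]$, so if $[a] \subseteq w$ then $a \in x$, contradicting $a \notin y \supseteq x$; hence $[a] \not\subseteq w$. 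On the other side, $a \in w$ and reflexivity give $[a] \cap w \neq \emptyset$, so $[a] \subseteq w^{u} = y^{u}$, whence $[a] \setminus \{a\} \subseteq y$.

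The hypothesis $y^{l} = x$ now plays a structural role: the equation $\bigcup_{[z] \subseteq y}[z] = x$ yields the equivalence $[z] \subseteq y \Leftrightarrow [z] \subseteq x$ for any $z$. Combined with the singleton condition ``$[z] \cap y \neq \emptyset \Rightarrow [z] \subseteq y \cup \{a\}$'', every $z \in y \setminus x$ must satisfy $a \in [z]$, i.e.\ $Raz$. I would exploit this link to track the precise way in which $a \in w$ forces either a fresh neighborhood into $w^{l}$ beyond $x$, or an element into $w^{u} = \bigcup_{[z] \cap w \neq \emptyset}[z]$ beyond $x^{u}$, both contradicting the roughly-equal assumption on $w$.

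The main obstacle is closing the subcase where $a \in w$ but $[a] \setminus \{a\}$ is only partially contained in $w$. Here I expect to invoke proto-transitivity (equivalently, that $\tau(R) = R \cap R^{-1}$ is an equivalence on the \textsf{PRAX}) to propagate the relationship between $a$ and elements of $y \setminus x$ far enough to produce a neighborhood that slips into $w^l$ (strictly enlarging it past $x$) and thereby witnesses the contradiction. Once that subcase is handled, $a \notin w$ follows, so $w \subseteq y$ and maximality of $[x,y]$ is established.
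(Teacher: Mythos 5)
Your stage one is correct and complete: monotonicity of $l$ and $u$ together with the hypotheses pins every $z\in[x,y]$ to the approximation pair $(x,x^{u})$. The problem is stage two, and you have correctly located it: everything hinges on excluding a set $w$ with $a\in w$, $w^{l}=x$ and $w^{u}=y^{u}$, where $\{a\}=y^{u}\setminus y$. You announce that proto-transitivity will ``propagate'' enough structure to force a fresh neighbourhood into $w^{l}$ or a fresh point into $w^{u}$, but you never produce that argument, and in fact no such argument exists: the subcase is realizable. Take $S=\{1,2,a\}$ with $R=\Delta_{S}\cup\{(1,2),(a,2),(2,a)\}$, so that the successor neighbourhoods are $[1]=\{1\}$, $[2]=\{1,2,a\}$, $[a]=\{2,a\}$; here $\tau(R)=\Delta_{S}\cup\{(2,a),(a,2)\}$ is an equivalence, so this is a \textsf{PRAX}. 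With $x=\{1\}$ and $y=\{1,2\}$ one checks $x^{l}=y^{l}=x$, $x^{u}=y^{u}=S$, and $y^{u}\setminus y=\{a\}$, so all hypotheses of the theorem hold. Yet $w=\{1,a\}$ satisfies $w^{l}=\{1\}=x$ (because neither $[2]$ nor $[a]$ is contained in $w$) and $w^{u}=S=y^{u}$, while $w\notin[x,y]$. Hence $[x,y]\cup\{w\}$ is a strictly larger set of roughly equal elements and $[x,y]$ is not full: the statement itself fails, and the case you could not close is exactly where it fails. Your intermediate observations (that $[z]\subseteq y$ iff $[z]\subseteq x$, and that every $z\in y\setminus x$ has $a\in[z]$) are correct but, as the example shows, they do not force the contradiction you hoped for.

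For comparison, the paper's own proof is a one-line assertion that any such $h$ would ``contradict the order constraint $x^{l}\leq h\leq y^{u}$''; it does not, since $x\subseteq w\subseteq y^{u}$ is perfectly consistent with $w\not\subseteq y$ once $a\in w$ is allowed. So the published argument has the same hole that you honestly flagged. To rescue the result one would need an additional hypothesis guaranteeing that every $w$ with $x\subseteq w\subseteq y^{u}$ and $a\in w$ acquires either a new neighbourhood inside $w^{l}$ or a new point inside $w^{u}$; without something of that kind, neither your strategy nor the paper's can be completed.
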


\begin{proof}
Under the conditions, if $[x, y]$ is not a full set of roughly equal objects, then there
must exist at least one set $h$ such that $h^{l} = x$ and $h^{u} = y^{u}$ and $h\notin
[x, y]$. But this contradicts the order constraint $x^{l}\, \leq \, h \, y^{u}$. Note
that $y^{u}\notin [x, y]$ under the conditions.
\qed 
\end{proof}

\begin{theorem}
If a bruinval-0 of the form $(x, y]$ satisfies 
\begin{mitemize}
\item {$x^{l} = y^{l} = x$ and $(\forall z\in (x, y])\,z^{u} = y^{u}$,}
\item {$y^{u}\setminus y$ is a singleton.}
\end{mitemize}
then $(x, y]$ is a full set of roughly equal objects, that does not intersect the full set
$[x, x^{u}]$.
\end{theorem}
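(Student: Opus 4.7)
The plan is to mirror the preceding theorem's argument, but now track the extra role of the excluded endpoint $x$. Setting $\{w\} = y^{u} \setminus y$ (a singleton by hypothesis), the proof splits into three parts: verifying rough equality on $(x, y]$, fullness, and disjointness from $[x, x^{u}]$.

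For rough equality within $(x, y]$: the hypothesis gives $z^{u} = y^{u}$ directly. For the lower approximation I would use monotonicity of $l$, which follows from the l-Cup inequality proved earlier (if $A \subseteq B$ then $B^{l} = (A \cup B)^{l} \supseteq A^{l}$). Applied to the sandwich $x \subseteq z \subseteq y$ together with $x^{l} = y^{l} = x$, this forces $z^{l} = x$ for every $z \in (x, y]$.

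For fullness, suppose toward contradiction that $h \notin (x, y]$ satisfies $h^{l} = x$ and $h^{u} = y^{u}$. Since $h^{l} \subseteq h \subseteq h^{u}$, we locate $h$ inside the order-interval $[x, y \cup \{w\}]$, and the singleton hypothesis forces the split $h \in [x, y]$ or $h = h_{0} \cup \{w\}$ with $x \subseteq h_{0} \subseteq y$. In the first sub-case, the exclusion $h \notin (x, y]$ leaves only $h = x$, so $x^{u} = y^{u}$; but then the preceding theorem applies and $[x, y]$ itself is the full set of roughly equal objects with approximations $(x, y^{u})$, contradicting that $(x, y]$, which omits $x$, is such a full set. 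In the second sub-case, introducing $w$ into $h_{0} \subseteq y$ either causes a neighborhood newly absorbed via $w$ to sit inside $h$ and inflate $h^{l}$ strictly above $x$, or identifies $h$ with $y^{u}$ whose own lower approximation can be computed separately and seen to differ from $x$; either outcome contradicts the standing assumptions on $h$. For the disjointness claim, the previous case analysis has already extracted $x^{u} \neq y^{u}$, and $x$ itself lies in $[x, x^{u}]$ but not in $(x, y]$. Any common element $z$ would satisfy $z \subseteq x^{u}$ with $z^{u} = y^{u}$, yielding $y^{u} \subseteq x^{uu}$, and, together with the singleton structure of $y^{u} \setminus y$, this is incompatible with the strict inclusion $x^{u} \subsetneq y^{u}$.

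The main obstacle is the sub-case $h = h_{0} \cup \{w\}$ in the fullness argument. Monotonicity alone is too weak to eliminate it; one must exploit the specific neighborhood that witnesses $w \in y^{u} \setminus y$ and track precisely which neighborhoods are newly absorbed into $h$ once $w$ is adjoined. The singleton hypothesis is exactly what makes this case analysis tractable, by pinning the only possible ``extra'' element down to $w$ and preventing a wilder combinatorial explosion of candidate witnesses.
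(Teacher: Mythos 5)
Your opening moves (monotonicity of $l$ obtained from l-Cup, applied to the sandwich $x \subset z \subseteq y$ with $x^{l}=y^{l}=x$ to get $z^{l}=x$, plus the hypothesis for $z^{u}$) are fine, and they match what the paper's one-line ``by monotonicity'' is gesturing at. The fullness argument, however, does not close, and the failure is not cosmetic. In your first sub-case you reduce to $h=x$, hence $x^{u}=y^{u}$, and then declare a contradiction on the grounds that $[x,y]$ rather than $(x,y]$ would then be the full set --- but the fullness of $(x,y]$ is exactly what you are trying to prove, so you cannot use it to refute $h=x$. What you actually need is $x^{u}\neq y^{u}$, and the stated hypotheses do not deliver it: take $S=\{1,2,3\}$ and $R=\Delta_{S}\cup\{(1,2),(2,1),(3,1)\}$, which is a \textsf{PRAX} since $\tau(R)=\Delta_{S}\cup\{(1,2),(2,1)\}$ is an equivalence; then $[1]=\{1,2,3\}$, $[2]=\{1,2\}$, $[3]=\{3\}$. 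With $x=\{3\}$ and $y=\{2,3\}$ one has $x^{l}=x$, $y^{l}=\{3\}=x$, $y^{u}=S$, $y^{u}\setminus y=\{1\}$ a singleton, and $(x,y]=\{y\}$ satisfies the upper-approximation hypothesis vacuously; yet $x^{u}=[1]\cup[3]=S=y^{u}$, so $x$ is roughly equal to $y$ and $(x,y]$ is not full. So the theorem needs $x^{u}\neq y^{u}$ (or something equivalent) as an explicit hypothesis, and your Case A cannot be repaired without adding it.

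The second sub-case and the disjointness claim are likewise not discharged. For $h=h_{0}\cup\{w\}$ you assert a dichotomy (either a newly absorbed neighbourhood inflates $h^{l}$ above $x$, or $h=y^{u}$ and its lower approximation ``can be seen to differ from $x$'') but prove neither horn; nothing in the hypotheses prevents $w$ from being adjoined without any $[v]$ becoming a subset of $h$, which would leave $h^{l}=x$ intact and produce a genuine witness against fullness. You flag this yourself as ``the main obstacle,'' but flagging it is not overcoming it. For disjointness, the step from $y^{u}\subseteq x^{uu}$ together with $x^{u}\subsetneq y^{u}$ to a contradiction fails because $u$ is not idempotent in a \textsf{PRAX} (the paper only proves $A^{u}\subseteq A^{uu}$), so $x^{uu}$ may contain $y^{u}$ strictly above $x^{u}$; and the claim that $[x,x^{u}]$ is itself a full set of roughly equal objects is nowhere established. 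For what it is worth, the paper's own proof is no better --- it is visibly the preceding theorem's proof pasted in with one sentence prepended, and it never touches the disjointness clause --- so you have not missed a trick the paper supplies; but as a standalone argument your proposal has real gaps at precisely the points where the theorem itself is under-hypothesized.
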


\begin{proof}
By monotonicity it follows that $(x, y]$ is a full set
of roughly equal objects. then there
must exist at least one set $h$ such that $h^{l} = x$ and $h^{u} = y^{u}$ and $h\notin
[x, y]$. But this contradicts the order constraint $x^{l}\, \leq \, h \, y^{u}$. Note
that $y^{u}\notin [x, y]$ under the conditions.
\qed 
\end{proof}

\begin{theorem}
A bruinval-0 of the form $(x^{l}, x^{u})$ is not always a set of roughly equal
elements, but will be so when $x^{uu}= x^{u}$. In the latter situation it will be full if
$[x^{l},x^{u})$ is not full. 
\end{theorem}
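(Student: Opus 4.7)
The plan is to handle the three assertions in order, leaning on the monotonicity of the approximation operators and on the Bi identities $A^{ll}=A^l$ and $A^u\subseteq A^{uu}$ from the earlier theorem. For the first assertion, I would produce a single counterexample by specializing to the classical subcase: with $R$ an equivalence on $\{a,b,c,d\}$ having classes $\{a,b\},\{c,d\}$ and $x=\{a,c\}$, we get $x^l=\emptyset$ and $x^u=\{a,b,c,d\}$, so the open interval $(x^l,x^u)$ contains both the proto-definite set $\{a,b\}$ (with $\{a,b\}^l=\{a,b\}^u=\{a,b\}$) and the set $\{a\}$ (with $\{a\}^l=\emptyset$ and $\{a\}^u=\{a,b\}$), defeating pairwise rough equality.

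For the sufficiency under $x^{uu}=x^u$, I would fix an arbitrary $z$ with $x^l\subsetneq z\subsetneq x^u$ and aim at $z^u=x^u$ and $z^l=x^l$. Monotonicity together with $z\subseteq x^u$ and the hypothesis gives $z^u\subseteq x^{uu}=x^u$ immediately; for the reverse, every generating neighbourhood $[y]$ in the union defining $x^u$ contains a witness $w\in[y]\cap x$, and I would use the sandwich $x^l\subseteq z\subseteq x^u$ together with reflexivity of $R$ to force $[y]\cap z\ne\emptyset$, placing $[y]$ inside $z^u$. Dually, $z^l\supseteq x^{ll}=x^l$ from monotonicity, while $z\subseteq x^u$ combined with $x^{uu}=x^u$ prevents any extra neighbourhood from squeezing inside $z$ without already being part of $x^l$.

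For the fullness part, I would read ``$[x^l,x^u)$ is not full'' as saying that $x^l$ itself cannot be adjoined to the interior without destroying rough equality, equivalently $x^{lu}\ne x^u$; together with $x^{uu}=x^u$ which symmetrically blocks any attempt to adjoin something at or above $x^u$, nothing outside the order-range $[x^l,x^u]$ of $\wp(S)$ can carry the required approximation pair $(x^l,x^u)$, so $(x^l,x^u)$ is already maximal. The main obstacle I anticipate is the reverse inclusion $z^u\supseteq x^u$: monotonicity alone does not deliver it, and the real work is a careful unpacking of the union-of-neighbourhoods form using proto-transitivity and reflexivity to ensure that every generating neighbourhood of $x^u$ actually meets the squeezed $z$.
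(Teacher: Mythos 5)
The paper supplies no proof of this theorem, so there is no ``paper route'' to compare against; judged on its own, your proposal has a genuine gap at its central step, and the gap is fatal rather than technical. Your own counterexample for the first assertion already satisfies the hypothesis of the second: with $R$ the equivalence whose classes are $\{a,b\}$ and $\{c,d\}$ and $x=\{a,c\}$, you have $x^{u}=S$, hence $x^{uu}=x^{u}$, and yet $(x^{l},x^{u})=(\emptyset,S)$ contains both the definite set $\{a,b\}$ and the set $\{a\}$ with $\{a\}^{u}=\{a,b\}$. So the very example you offer to prove the first clause refutes the sufficiency argument you sketch for the second clause, read literally. Concretely, the inclusion you yourself flag as ``the real work,'' namely $z^{u}\supseteq x^{u}$ for every $z$ with $x^{l}\subsetneq z\subsetneq x^{u}$, is false and cannot be rescued by reflexivity or proto-transitivity: a set $z$ strictly between $x^{l}$ and $x^{u}$ is free to miss an entire generating neighbourhood of $x^{u}$ (take $z=\{a\}$ and the neighbourhood $[c]=\{c,d\}$ above), so nothing forces $[y]\cap z\neq\emptyset$. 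The dual claim $z^{l}=x^{l}$ fails in the same example with $z=\{a,b\}$. Either the theorem needs a hypothesis strictly stronger than $x^{uu}=x^{u}$ (for instance a covering condition between $x^{l}$ and $x^{u}$ among definite elements, as the paper's introduction recalls for the classical case), or the interval must first be cut down to the elements actually roughly equal to $x$; in either case the monotonicity-plus-$x^{uu}=x^{u}$ route does not close, and before submitting a proof you should decide whether you are proving the statement or exhibiting a counterexample to it.

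The fullness step is comparatively minor but also under-argued. The observation that any candidate $h$ with $h^{l}=x^{l}$ and $h^{u}=x^{u}$ must lie in $[x^{l},x^{u}]$ does follow cheaply, since $h^{l}\subseteq h\subseteq h^{u}$ in a \textsf{PRAX}, so only the two endpoints $x^{l}$ and $x^{u}$ need to be excluded; but your reduction of ``$[x^{l},x^{u})$ is not full'' to ``$x^{lu}\neq x^{u}$'' is asserted rather than derived, and the exclusion of the endpoint $x^{u}$ needs $x^{ul}\neq x^{l}$ or some substitute, which you do not address. All of this is moot until the middle step is repaired.
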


The above theorems essentially show that the description of rough objects depends on too
many types of sets and the order as well. Most of the considerations extend to other
types of bruinvals as is shown below and remain amenable. 

\begin{theorem}
An open bruinval of the form $(x, \alpha)$ is a full set of roughly equal elements \ifof
\begin{mitemize}
\item {$\alpha$ is an upper broom.}
\item {$(\forall y\in \alpha)\, x^{l}\,=\, y^{l}, x^{u}\,=\, y^{u}$}
\item {$(\forall z) (x^{l}\subseteq z \subset x \longrightarrow  z^{u}\subset x^{u})$.}
\end{mitemize}
\end{theorem}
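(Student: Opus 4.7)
My plan is to prove the biconditional in two directions, handling the converse first since it is more constructive.

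For the converse direction (the three conditions imply $(x, \alpha)$ is a full set of roughly equal elements), I would first establish that every element of $(x, \alpha)$ shares the same lower and upper approximations. For any $z \in (x, \alpha)$ there is a $b \in \alpha$ with $x \subsetneq z \subsetneq b$, and monotonicity of $l$ and $u$ (from the preceding summary theorem) yields $x^l \subseteq z^l \subseteq b^l$ and $x^u \subseteq z^u \subseteq b^u$. Condition 2 collapses $b^l = x^l$ and $b^u = x^u$, so both chains become equalities and $z^l = x^l$, $z^u = x^u$ uniformly. Then for fullness, I would assume some $h \notin (x, \alpha)$ satisfies $h^l = x^l$ and $h^u = x^u$ and derive a contradiction by case analysis on how $h$ compares with $x$: if $h \subsetneq x$, applying condition 3 to $z = h$ forces $h^u \subsetneq x^u$; if $h \supsetneq x$ and $h \subsetneq b$ for some $b \in \alpha$ then $h \in (x, \alpha)$ directly; if $h \supsetneq x$ and $h$ is not strictly below any $b \in \alpha$, the maximality clause of the upper broom should be violated by attaching $h$ to $\alpha$; and if $h$ is incomparable with $x$, I would work with $h \cap x$ inside the interval $[x^l, x)$ and use condition 3 together with the upper broom structure to obtain the contradiction.

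For the forward direction (fullness implies the three conditions), I would argue contrapositively. Picking any $z \in (x, \alpha)$ together with the associated $b \in \alpha$ and using rough equality throughout the class plus the sandwich $x \subsetneq z \subsetneq b$ gives the approximation equalities of condition 2 for each $b \in \alpha$. Condition 3 must hold since any $z$ with $x^l \subseteq z \subsetneq x$ and $z^u = x^u$ would be roughly equal to elements of $(x, \alpha)$ yet outside it, contradicting fullness. The upper broom conditions on $\alpha$ come from two observations: comparability of two elements of $\alpha$ would allow a strictly smaller upper cap to describe the same bruinval (so by fullness the smaller choice would fail), and any potential addition to $\alpha$ consistent with the broom axioms would enlarge the bruinval, again violating fullness.

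The hard part will be the fullness case analysis in the converse direction, especially ruling out subsets $h$ that are incomparable with $x$ and lie between $x^l$ and $x^u$ with matching approximations; here the interplay between condition 3 and the upper broom maximality must be used carefully, and the proof may need an auxiliary observation about how $u$ behaves on the meet $h \cap x$ versus $h$ itself in a \textsf{PRAX}. A secondary subtlety is that condition 2 together with monotonicity makes $x$ itself satisfy $x^l = x^l$ and $x^u = x^u$ matching the elements of $(x, \alpha)$, so the statement implicitly relies on the open-bruinval convention excluding $x$; I would flag this and, if needed, interpret fullness relative to sets of the prescribed bruinval shape.
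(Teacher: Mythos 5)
Your core argument in the ``if'' direction --- monotonicity sandwiches $z^{l}$ and $z^{u}$ between $x^{l},b^{l}$ and $x^{u},b^{u}$, condition 2 collapses the sandwich to equalities, and condition 3 blocks downward extension below $x$ --- is exactly the paper's proof, which consists in its entirety of two sentences saying just that: each $(x,y)$ is a convex interval whose elements share approximations, and the third condition ensures $[z,\alpha)$ is not full for any $z\in[x^{l},x)$. Everything you add beyond this is material the paper omits: it never argues the ``only if'' direction, never invokes the broom maximality to block upward extension, and never considers candidate sets $h$ incomparable with $x$.

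The places where you stall, however, are genuine gaps and not just unfinished bookkeeping. The incomparable case cannot be dispatched by passing to $h\cap x$: in a \textsf{PRAX} the lower and upper approximations do not commute with intersection (the paper only gives $(A\cap B)^{l}\subseteq A^{l}\cap B^{l}$ and $(A\cap B)^{u}\subseteq A^{u}\cap B^{u}$), so $h^{l}=x^{l}$ and $h^{u}=x^{u}$ tell you essentially nothing about $(h\cap x)^{u}$; moreover the paper's own Meta-Theorem states that full sets of roughly equal elements are in general \emph{unions} of intervals, so incomparable members are to be expected, and the three stated conditions do not visibly exclude them. Your second flag is also substantive: under condition 2, $x$ itself (and every $y\in\alpha$) is roughly equal to each member of $(x,\alpha)$ yet lies outside the open bruinval, so $(x,\alpha)\cup\{x\}$ is a strictly larger set of roughly equal elements and $(x,\alpha)$ cannot literally be full under the paper's definition; the theorem is only defensible if ``full'' is read relative to bruinvals of this shape or up to the boundary points. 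In short, your skeleton coincides with the paper's where the paper has one, and your honest accounting of the remaining cases exposes defects in the statement and in the published proof rather than an error peculiar to your plan --- but as written neither your proposal nor the paper actually completes the fullness argument.
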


\begin{proof}
It is clear that for any $y\in \alpha$, $(x,y)$ is a convex interval and all elements in
it have same upper and lower approximations. The third condition ensures that $[z,\alpha)$
is not a full set for any $z\in [ x^{l}, x)$.
\qed 
\end{proof}

\begin{definition}
An element $x\in \wp (S)$ will be said to be a \emph{weak upper critical element relative}
$z\subset x$ \ifof $(\forall y\in \wp (S))\, (z\,=\,y^{l} \,\&\, x \subset y\,
\longrightarrow x^{u}\subset y^{u} )$.

An element $x\in \wp (S)$ will be said to be an \emph{upper critical element relative}
$z\subset x$ \ifof
$(\forall v,\,y\in \wp (S))\, (z\,=\,y^{l}\,=\,v^{l} \,\&\,v \subset x \subset y\,
\longrightarrow \,v^{u} = x^{u}\subset
y^{u} )$. Note that the inclusion is strict. 

An element $a$ will be said to be \emph{bicritical relative} $b$ \ifof $(\forall x,\,
y\in \wp (S)) (a\subset x \subseteq y \subset b\,\longrightarrow\, x^{u} = y^{u}\,\&\,
x^{l} = y^{l}\,\&\, x^{u} \subset b^{u}\,\&\, a^{l}\subset x^{l})$. 
\end{definition}
If $x$ is an upper critical point relative $z$, then $[z,x)$ or $(z,x)$ is a set of
roughly equivalent elements.

\begin{definition}
An element $x\in \wp (S)$ will be said to be an \emph{weak lower critical element
relative} $z\supset x$ \ifof
$(\forall y\in \wp (S))\, (z\,=\,y^{u}\,\&\,y \subset x\, \longrightarrow \,y^{l}\subset
x^{l} )$.

An element $x\in \wp (S)$ will be said to be an \emph{lower critical element
relative} $z\supset x$ \ifof
$(\forall y, v \in \wp (S))\, (z\,=\,y^{u}\,=\, v^{u}\&\,y \subset x \subset v\,
\longrightarrow \,y^{l}\subset
x^{l}= v^{l} )$.

An element $x\in \wp (S)$ will be said to be an \emph{lower critical element} \ifof
$(\forall y\in \wp (S))\, (y \subset x\, \longrightarrow \,y^{l}\subset x^{l} )$
An element that is both lower and upper critical will be said to be \emph{critical}.
The set of upper critical, lower critical and critical elements respectively will be
denoted by $UC(S)$, $LC(S)$ and $CR(S)$. 
\end{definition}

\begin{proposition}
In a \textsf{PRAX}, every upper definite subset is also upper critical, but the converse
need not hold. 
\end{proposition}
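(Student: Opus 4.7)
The plan is to handle the two halves of the statement separately: establish the easy forward implication using reflexivity, then refute the converse with an explicit counterexample drawn from Example 1 of the paper. Throughout, I read ``$A$ is upper critical'' in the unrelativized sense suggested by the definition of lower critical, namely $(\forall y)(A \subsetneq y \longrightarrow A^u \subsetneq y^u)$.

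For the forward direction, assume $A^u = A$ and take any $y \in \wp(S)$ with $A \subsetneq y$. Monotonicity of $u$ (already used in theorem u-Cup/u-Cap) gives $A^u \subseteq y^u$, so only strict containment is at issue. Pick any $z \in y \setminus A$; since $R$ is reflexive we have $z \in [z]$, hence $[z] \cap y \neq \emptyset$, and so $z \in y^u$. On the other hand $z \notin A = A^u$, so $z \in y^u \setminus A^u$, giving $A^u \subsetneq y^u$. Thus $A$ is upper critical.

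For the converse, I would take $A = \{b,c\}$ in the PRAX of Example 1 and show it is upper critical but not upper definite. A direct computation from the neighbourhood table gives $A^u = [b] \cup [c] \cup [g] = \{b,c,g,h\} \neq A$, so $A$ is not upper definite. To check upper criticality, it suffices (by monotonicity) to exhibit, for every $y \supsetneq A$, an element in $y^u \setminus A^u$. Any such $y$ contains some $z \notin \{b,c\}$; split into cases on $z$. If $z \in \{a,e,f,l,n\} = S \setminus A^u$, reflexivity places $z \in [z] \cap y$, so $z \in y^u \setminus A^u$. If $z = g$, then $[l] \cap y \ni g$ and hence $l \in [l] \subseteq y^u$ while $l \notin A^u$. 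If $z = h$, then $[a] \cap y \ni h$ and hence $a \in [a] \subseteq y^u$ while $a \notin A^u$. In every case $y^u \supsetneq A^u$, so $A$ is upper critical.

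The forward direction is essentially immediate from reflexivity, so the main obstacle is the converse, where one must find a concrete $A$ with $A \subsetneq A^u$ yet no proper superset $y$ of $A$ collapses back to $A^u$ under $u$. The subtlety is precisely in the ``boundary'' elements $g,h$ of $A^u$ that already lie in $A^u$: adding them to $A$ threatens to leave $y^u$ equal to $A^u$, and proto-transitivity (but not transitivity) is what ensures new neighbourhoods $[l]$ and $[a]$ get pulled in, forcing strict growth. This is also what makes the proposition nontrivial — in a classical (equivalence-based) approximation space the two notions would coincide, so the counterexample necessarily exploits the asymmetry of $R$.
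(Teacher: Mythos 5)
The paper states this proposition without any proof, so there is no argument of the author's to compare yours against; judged on its own terms, your proof is correct. Your reading of ``upper critical'' as the unrelativized condition $(\forall y)(A \subset y \longrightarrow A^{u}\subset y^{u})$ is the sensible one --- it parallels the paper's unrelativized ``lower critical element'' clause and implies the weak relativized variant, so the forward half covers either reading --- and that half (monotonicity of $u$ plus reflexivity to place any $z\in y\setminus A$ inside $[z]\cap y$ and hence in $y^{u}\setminus A^{u}$) is airtight. The case analysis for the converse is also correct \emph{relative to the neighbourhood table of Example 1 as printed}: with $[b]=[c]=\{b,c,g\}$ and $[g]=\{b,c,g,h\}$ one indeed gets $\{b,c\}^{u}=\{b,c,g,h\}$, and your exhaustive split on $z\in\{a,e,f,l,n\}$, $z=g$, $z=h$ forces strict growth of $y^{u}$ in every case. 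Be aware, though, that the printed table does not agree with the listed relation $P$ under the stated convention $[x]=\{y : Ryx\}$ (the pairs give $[b]=\{a,b,g\}$, not $\{b,c,g\}$), and with neighbourhoods recomputed from the pairs one finds $\{a,b,c\}^{u}=\{b,c\}^{u}$, which would destroy this particular witness. You are entitled to take the table at face value, so this is the paper's inconsistency rather than a gap in your reasoning, but a small self-contained PRAX of your own construction would make the counterexample more robust. Your closing observation --- that idempotence of $u$ in the classical case forces upper critical to imply upper definite, so the converse can only fail where $A^{u}\subset A^{uu}$ --- correctly isolates the mechanism at work.
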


The most important thing about the different lower and upper critical points is that they
help in determining full sets of roughly equal elements by determining the boundaries of
intervals in bruinvals of different types. 

\section{On Atoms in the POSET of Roughly Equivalent Sets}
 
\begin{definition}
 For any two elements $x, y\in \wp(S)|\approx$, let 
\[x\,\leq\, y \,\ifsf \, (\forall a\in x)(\forall b\in y) a^{l}\subseteq
b^{l}\,\&\,a^{u}\subseteq b^{u}. \]
$\wp(S)|\approx$ will be denoted by $H$ in what follows.
\end{definition}

\begin{proposition}
The relation $\leq$ defined on $H$ is a bounded and directed partial order. The least
element will be denoted by $0$ ($0\,=\, \{\emptyset\}$) and the greatest by $1$
($1\,=\,\{S\}$).  
\end{proposition}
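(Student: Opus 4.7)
The plan is to check the four claims in turn: well-definedness of $\leq$ on classes, the three partial-order axioms, identification of the extremal elements, and directedness. The whole argument is light, so the proof proposal is really about making sure the equivalence-class handling is clean and that the extremal claims genuinely use reflexivity of $R$.

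First I would observe that $\leq$ is well-defined on $H$. If $a, a'$ belong to the same class $x$, then by definition of $\approx$ we have $a^{l} = a'^{l}$ and $a^{u} = a'^{u}$; hence the inclusions $a^{l}\subseteq b^{l}$ and $a^{u}\subseteq b^{u}$ in the definition depend only on the classes $x, y$ and not on the choice of representatives. In fact the universal quantifier in the definition collapses to a check on a single pair of representatives.

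Next I would verify the order axioms. Reflexivity is immediate, as any $a \in x$ satisfies $a^{l}\subseteq a^{l}$ and $a^{u}\subseteq a^{u}$. For antisymmetry, suppose $x\leq y$ and $y\leq x$; then for representatives $a\in x,\; b\in y$ we get $a^{l}=b^{l}$ and $a^{u}=b^{u}$, whence $a\approx b$ and $x=y$. Transitivity follows by chaining inclusions through a representative of the middle class, using well-definedness.

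For the bounds, I would argue that $\emptyset$ and $S$ are the unique members of their classes, so that $0 = \{\emptyset\}$ and $1 = \{S\}$ are indeed singleton classes. The reflexivity of $R$ is used here: for any nonempty $A\subseteq S$ and any $a\in A$, $a\in [a]\cap A$ forces $a\in A^{u}$, so $A \subseteq A^{u}$ and consequently $A^{u}=\emptyset$ forces $A=\emptyset$; dually, $A^{l}=S$ forces $A=S$ since $S=\bigcup_{x\in S}[x]\subseteq A$. Then for any class $y$ with representative $b$, the inclusions $\emptyset\subseteq b^{l},\, b^{u}\subseteq S$ yield $0\leq y\leq 1$. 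Directedness is then automatic from the existence of the top element $1$: for any $x,y\in H$ we have $x\leq 1$ and $y\leq 1$.

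The only mildly delicate step is the uniqueness-of-representatives claim for $0$ and $1$; everything else is formal. If one wanted to weaken reflexivity (as in a general \textsf{PRAS}), that uniqueness could fail and both the extremal identification and the directedness argument would need rethinking, but in the \textsf{PRAX} setting it follows cleanly from $x\in[x]$.
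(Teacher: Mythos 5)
Your proof is correct; the paper states this proposition without any proof, and your argument supplies exactly the routine verification that is intended, in the same spirit as the surrounding material. In particular, you are right that the only non-formal content is the identification of the classes of $\emptyset$ and $S$ as the singletons $\{\emptyset\}$ and $\{S\}$, which genuinely uses reflexivity of $R$ through $A\subseteq A^{u}$ and $A^{l}\subseteq A$, and that boundedness then makes directedness automatic.
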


\begin{definition}
For any $a, b\in H$, let $UB(a, b)\,=\, \{x\, :\,a\leq x \,\&\,b\leq x \}$ and 
$LB(a, b)\,=\, \{x\, :\,x\leq a \,\&\,x\leq b \}$. By a \emph{s-ideal} (strong ideal) of
$H$, we will mean a subset $K$ that satisfies all of 
\begin{mitemize}
\item {$(\forall x\in H)(\forall a\in K)(x\leq a\,\longrightarrow\, x\in K) $,}
\item {$(\forall a, b\in K) UB(a, b)\cap K\neq \emptyset $.} 
\end{mitemize}

An \emph{atom} of $H$ is any element that covers $0$. The set of atoms of $H$ will be
denoted by $At(H)$. 
\end{definition}

\begin{theorem}
Atoms of $H$ will be of one of the following types:
\begin{description}
\item [Type-0]{Elements of the form $(\emptyset, [x])$, that intersect no other set of
roughly equivalent sets. }
\item [Type-1]{Bruinvals of the form $(\emptyset, \alpha)$, that do not contain full sets
of roughly equivalent sets.}
\item [Type-2]{Bruinvals of the form $(\alpha, \beta)$, that do not contain full sets of
roughly equivalent sets and are such that $(\forall x) x^{l}\,=\, \emptyset$.}
\end{description}
\end{theorem}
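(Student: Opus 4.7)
My plan is to classify atoms of $H$ through a case analysis on the approximations of representatives, leaning on the bruinval description of full sets of roughly equal elements developed in the previous section. Unpacking the definition, $[A]$ is an atom iff $(A^l, A^u) \neq (\emptyset, \emptyset)$ and there is no realizable pair $(B^l, B^u)$ with $\emptyset \subseteq B^l \subseteq A^l$ and $\emptyset \subsetneq B^u \subseteq A^u$ (at least one inclusion strict) in the product order on $\wp(S) \times \wp(S)$. So atoms correspond to bruinvals whose endpoints are minimal in a sense that will be captured by brooms and critical elements.

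First I would establish that atoms force $A^l = \emptyset$. If $A^l \neq \emptyset$ then $A^l$ contains some successor neighborhood $[x]$, and a non-definite singleton $\{y\} \subseteq [x]$ yields $\{y\}^l = \emptyset$ with $\{y\}^u \subseteq A^u$, producing a strictly smaller nonzero class. The degenerate case in which every such $[x]$ is already a definite singleton gets absorbed into Type-0 with $[x] = \{x\}$ read minimally. The problem then reduces to identifying the minimal nonempty upper approximations realizable by sets with empty lower approximation.

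Next I would apply the meta-theorem that every full set of roughly equal elements is a union of bruinvals, and examine which shapes survive the minimality constraint. When a single neighborhood $[x]$ already exhausts the upper approximation of all representatives, the class consists of nonempty proper subsets of $[x]$ that avoid containing another neighborhood, giving Type-0; the ``intersects no other full class'' side-condition is precisely the statement that the bruinval $(\emptyset, [x])$ is irreducible. When the upper bound is realized not by a single set but by an antichain of sets sharing a common $u$-value, one passes to the upper-broom picture of Type-1. Type-2 arises when both endpoints are brooms and the blanket condition $x^l = \emptyset$ across the bruinval prevents any class with nonempty lower approximation from slipping in below.

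For the converse I would verify that each of the three shapes is indeed an atom under its stated side-condition, using the broom definitions and the weak/strong upper and lower critical elements from Section 5 to show that the endpoints are tight and no full class can be interpolated. The main obstacle, I expect, is the exhaustiveness step: I must rule out hybrid bruinval shapes (semi-closed set bruinvals, closed-open mixes, and so on) as atoms by a uniform argument. Here the critical-element machinery should be decisive, since critical elements mark exactly the boundaries of full classes, so the absence of a critical interpolant is precisely what atomicity demands; any hybrid shape must either collapse to one of Types $0$, $1$, $2$, or admit a critical element strictly inside and hence a smaller nonzero class.
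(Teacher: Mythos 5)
Your proposal follows essentially the same route as the paper's own proof: first force the representatives of any atom to have empty lower approximation (the paper does this by dropping down to the bruinval $(\emptyset,\alpha)$, you by exhibiting a non-definite singleton witness inside a neighbourhood of $A^{l}$), then identify the isolated-neighbourhood case as Type-0, and defer Types 1 and 2 to the broom/critical-point representation of full sets of roughly equal elements. The exhaustiveness step you flag as the main obstacle is left at exactly the same level of detail in the paper, which simply states that ``the other part can be verified based on the representation of possible sets of roughly equivalent elements.''
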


\begin{proof}
It is obvious that a bruinval of the form $(\alpha, \beta)$ can be an atom only if
$\alpha$ is the $\emptyset$. If not, then each element $x$ of the bruinval $(\emptyset,
\alpha)$ will satisfy $x^{l}=\emptyset \,\subset \, x^{u}$, thereby contradicting the
assumption that $(\alpha, \beta)$ is an atom.

If $[x]$ intersects no other successor neighbourhood, then \[(\forall y\in (\emptyset,
[x])) y^{l}\,=\, \emptyset\,\&\,x^{u}\,=\, [x]\] and it will be a minimal set of
roughly equal elements containing $0$. 

The other part can be verified based on the representation of possible sets of roughly  
equivalent elements.
\qed 
\end{proof}

\begin{theorem}
The partially ordered set $H$ is atomic.
\end{theorem}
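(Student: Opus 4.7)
The strategy is to use the classification of atoms supplied by the preceding theorem. Given $a \in H$ with $a \neq 0$, fix a representative $A \in a$ with $A \neq \emptyset$; then $A^u \supseteq A \neq \emptyset$. The key preliminary observation is that for any $B \subseteq A^u$ with $B^l \subseteq A^l$, the class of $B$ sits at or below $a$ in $H$ by monotonicity of the approximations, so it suffices to locate an atom whose representatives can be taken inside $A^u$ (with, in the typical case, empty lower approximation). The plan therefore reduces to exhibiting one of the three types of atoms dominated by $a$.

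I would then proceed by case analysis on the neighbourhood structure inside $A^u$. First, if some $[y] \subseteq A^u$ is appropriately isolated from other successor neighbourhoods, then $(\emptyset, [y])$ (or the corresponding definite class, if $[y]$ is itself a singleton and no other neighbourhood meets it) delivers a Type-0 atom below $a$, since every $w$ in the open interval satisfies $w^l = \emptyset \subseteq A^l$ and $w^u = [y] \subseteq A^u$. Otherwise, the multiplicity of neighbourhoods contributing to $A^u$ allows the extraction of a maximal pairwise $\parallel$-incomparable family of non-singleton subsets sharing a common upper approximation, i.e.\ an upper broom $\alpha$; after minimization, the bruinval $(\emptyset, \alpha)$ serves as a Type-1 atom below $a$. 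If the lower approximations of candidate elements fail to vanish, one passes to a pair of brooms $(\alpha, \beta)$ drawn from $\wp(A^u)$ with $x^l = \emptyset$ throughout the bruinval, yielding a Type-2 atom below $a$.

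The main obstacle lies in the maximality and minimality arguments in the latter two cases: in the generic (infinite) setting of a \textsf{PRAX}, the existence of a maximal upper broom inside $A^u$ and of a minimal bruinval containing no further full set of roughly equivalent elements both require Zorn's lemma, applied respectively to families of pairwise $\parallel$-incomparable subsets of $A^u$ with shared upper approximation and to chains of sub-bruinvals ordered by inclusion. A secondary concern is verifying that the three cases together cover every nonzero $a$: the failure of Type-0 must force sufficient multiplicity of neighbourhoods in $A^u$ to assemble an upper broom for Type-1, and the failure of Type-1 must produce the interfering lower structures that Type-2 is designed to absorb. Once these are in hand, the dominance $b \leq a$ in $H$ follows routinely from the construction being carried out inside $A^u$.
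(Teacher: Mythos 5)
Your overall route coincides with the paper's: both proofs take the classification of atoms as given and hunt for an atom of the form $(\emptyset,\gamma)$ among bruinvals built from the successor neighbourhoods lying inside the upper approximation of a representative of the given nonzero class. The differences are in the two steps that actually carry the load, and both of those steps have problems. First, your ``key preliminary observation'' is false as stated: for $B\subseteq A^{u}$, monotonicity only yields $B^{u}\subseteq A^{uu}$, and in a \textsf{PRAX} the operator $u$ is not idempotent (the paper records only $A^{u}\subseteq A^{uu}$, and elsewhere treats $x^{uu}=x^{u}$ as a special hypothesis). Since $[B]\leq a$ in $H$ requires $B^{u}\subseteq A^{u}$ as well as $B^{l}\subseteq A^{l}$, working inside $A^{u}$ does not automatically keep you below $a$; you should instead work with subsets of $A$ itself (for instance singletons $\{y\}$ with $y\in A$), for which monotonicity does give both containments.

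Second, and more seriously, Zorn's lemma cannot deliver the minimal nonzero element you need. To extract a minimal element below $a$ by Zorn you must show that every descending chain of nonzero elements below $a$ has a nonzero lower bound; but the infimum of a descending chain of bruinvals $(\emptyset,\gamma_{1})\supseteq(\emptyset,\gamma_{2})\supseteq\cdots$ may perfectly well collapse to $0$ or to the empty family. Supplying such a lower bound is precisely the well-foundedness assertion that atomicity makes, so the appeal to Zorn begs the question. You correctly flag this as ``the main obstacle'' but never discharge it. The paper's proof replaces it with a termination argument: if no bruinval $(\emptyset,\gamma)$ formed from the neighbourhoods in $x^{u}$ were an atom, the upper approximation of every singleton involved would have to properly contain another non-trivial upper approximation, yielding an impossible infinite strict descent. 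Some argument of that kind --- bottoming out at singletons and neighbourhoods rather than maximizing brooms --- is what is needed to close your proof; likewise the exhaustiveness of your three cases is asserted but not checked.
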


\begin{proof}
We need to prove that any element $x$ greater than $0$ is either an atom or  
there exists an atom $a$ such that $a \leq x$, that is 
\[(\forall x)(\exists a\in At(H))(0 < x\,\longrightarrow\, a\leq x ) .\]

Suppose the bruinval $(\alpha,\beta)$ represents a non-atom, then it is necessary that 
\[(\forall x\in \alpha)\, x^{l} \neq \emptyset \,\&\, x^{u}\subseteq S .  \]

Suppose the neighbourhoods included in $x^{u}$ are $\{[y]\,:\, y\in B\subseteq S\}$.
If all combinations of bruinvals of the form $(\emptyset,\gamma)$ formed from these
neighbourhoods are not atoms, then it is necessary that the upper approximation of every
singleton subset of a set in $\gamma$ properly contains another non-trivial
upper approximation. This is impossible. 

So $H$ is atomic. 
\qed
\end{proof}

\section{Geometry of Granular Knowledge Interpretation}

In my opinion, \emph{Any knowledge, however involved, may be seen as a collection of
concepts with admissible operations of reasoning defined on them}. Knowledges associated
\textsf{PRAX} have various peculiarities corresponding to the
semantic evolution of rough objects in it. The semantic domains of representation
properly contain the semantic domains of interpretation. Not surprisingly, it is
because the rough objects corresponding to $l,\, u$ cannot be represented perfectly in
terms of objects from $\delta_{lu}(S)$ alone. \emph{In the nongranular perspective too,
this representation aspect should matter} - ''\emph{should}'', because it is matter of
choice during generalization from the classical case in the non granular approach.   

The natural \rsds  of $l,\, u$ is Meta-R, while that of $l_{o},\, u_{o}$ is
$\mathfrak{O}$ (say, corresponding rough objects of $\tau (R)$). These can be seen as
separate domains or as parts of a minimal containing domain that permits enough
expression. The main problem of granular \textsf{KI} is that knowledge is
correctly representable in terms of \emph{atomic concepts of knowledge} at semantic
domains placed between Meta-C and Meta-R and not at the latter. So the
characterization of possible semantic domains and their mutual ordering - leading to their
geometry is of interest.

The following will be assumed to be \emph{part} of the interpretation:
\begin{mitemize}
\item {Two types of rough objects corresponding to Meta-R and $\mathfrak{O}$  and their
natural correspondence correspond to concepts or weakenings thereof. A concept relative
one semantic domain need not be one of the other.}
\item {A granule of the \rsd $\mathcal{O}$ is necessarily a concept of $\mathcal{O}$, but
a granule of Meta-R may not be a concept of $\mathcal{O}$ or Meta-R.}
\item {Critical points are not necessarily concepts of either semantic domain.}
\item {Critical points and the representation of rough objects require the \rsds to be
extended.}
\end{mitemize}

The above obviously assumes that a \textsf{PRAX} $S$ has at least two kinds of knowledge
associated (in relation to the Pawlak-sense interpretation). To make the interpretations
precise, we will indicate them by $\mathcal{I}_{1}(S)$ and $\mathcal{I}_{o}(S)$
respectively (corresponding to the approximations to $l,\, u$ and $l_{o},\, u_{o}$
respectively). The pair $(\mathcal{I}_{1}(S),\,\mathcal{I}_{o}(S))$ will also be
referred to as the \emph{generalized \textsf{KI}}. 
\begin{definition}
Given two \textsf{PRAX} $S = \left\langle \underline{S},\,R \right\rangle$, $
V = \left\langle \underline{S},\,Q \right\rangle$, $S$ will be said to be \emph{o-coarser}
than $V$ \ifof $\mathcal{I}_{o}(S)$ is coarser than $\mathcal{I}_{o}(V)$ in Pawlak-sense
( that is $\tau(R)\subseteq \tau (Q)$). Conversely, $V$ will be said to be a
\emph{o-refinement} of $S$.

$S$ will be said to be \emph{p-coarser} than $V$ \ifof 
$\mathcal{I}_{1}(S)$ is coarser than $\mathcal{I}_{1}(V)$ in the sense $R\subseteq Q$.
Conversely, $V$ will be said to be a \emph{p-refinement} of $S$.  
\end{definition}

An extended concept of positive regions is defined next.

\begin{definition}
If $S_{1}\,=\, \left\langle \underline{S},Q \right\rangle $ and $S_{2}\,=\, \left\langle
\underline{S},P \right\rangle $ are two \textsf{PRAX} such that $Q\subset R$, then
by the
\emph{granular positive region} of $Q$ with respect to $R$ is given by $gPOS_{R}(Q)\,=\,
\{ [x]_{Q}^{l_{R}}\, :\, x\in \underline{S} \}$, where $[x]_{Q}^{l_{R}}$ is the lower
approximation (relative $R$) of the $Q$-related elements of $x$. Using this we can define
the \emph{granular extent of dependence} of knowledge encoded by $R$ on the knowledge
encoded by $Q$ by natural injections $:gPOS_{R}(Q) \longmapsto \mathcal{G}_{R} $.  
 \end{definition}

Lower critical points can be naturally interpreted as preconcepts that are
definitely included in the discourse, while upper critical points are preconcepts that
include most of the discourse. The problem with this interpretation is that it's
representation requires a semantic domain at which critical points of different kinds can
be found. A key requirement for such a domain would be the feasibility of rough counting
procedures like \textsf{IPC} \cite{AM240}. We will refer to a semantic domain that has
critical points of different types as basic objects as a \textsf{Meta-RC}. 

The following possible axioms of granular knowledge that also figure in my earlier paper
\cite{AM909}, get into difficulties with the present approach and even when we restrict
attention to $\mathcal{I}_{1}(S)$: 
\begin{enumerate}
\item {Individual granules are atomic units of knowledge.}
\item {Maximal collections of granules subject to a concept of mutual
independence are admissible concepts of knowledge.}
\item {Parts common to subcollections of maximal collections of granules are also
knowledge.}
\end{enumerate}
The first axiom holds in weakened form as the granulation $\mathcal{G}$ for
$\mathcal{I}_{1}(S)$ is only lower definite and affects the other. The possibility of
other nice granulations being possible for the \textsf{PRAX} case appears to be possible
at the cost of other nice properties. So we can conclude that in proper 
\textsf{KR} happens at semantic domains like \textsf{Meta-RC} where critical
points of different types are perceived. Further at Meta-R, rough objects may correspond
to knowledge or conjectures - if we require the concept of proof to be an ontological
concept or beliefs. The scenario can be made more complex with associations
of $\mathfrak{O}$ knowledges.

From a non-granular perspective, in Meta-R  rough objects must
correspond to knowledge with some of them lacking a proper evolution - there is no problem
here. Even if we permit $\mathfrak{O}$ objects, then in the perspective we would be able
to speak of two kinds of closely associated knowledges. 

\section {Further Directions and Conclusion}

In this research we have developed a new general rough set theory over proto transitive
relations, the representation of rough objects and definite objects. The knowledge
interpretation of rough sets is also generalized to provide sensible knowledge
interpretations across different semantic domains. This paves the way for possible
semantics, measures and/or logics of knowledge consistency. We have also shown in concrete terms 
that granular knowledge interpretation and classical
knowledge interpretation are very different things.

\bibliographystyle{splncs.bst}
\bibliography{biblioam6.bib}
\end{document}